\pgfplotsset{compat=newest}
\newcounter{loopcntr}
\patchcmd{\thebibliography}{\section*{\refname}}{}{}{}
\let\emph\relax 
\DeclareTextFontCommand{\emph}{\it}
\newtheorem{theorem}{Theorem}
\newtheorem{proposition}{Proposition}
\newtheorem{example}{Example}
\newtheorem{corollary}{Corollary}
\title{Reparameterizing Mirror Descent \\as Gradient Descent}
\author{ {Ehsan Amid\thanks{An earlier version of this manuscript (with additional results on the matrix case) appeared as "Interpolating Between Gradient Descent and Exponentiated Gradient Using Reparameterized Gradient Descent".}\; and Manfred K. Warmuth} \\
Google Research, Brain Team\\
Mountain View, CA \\
\texttt{\{eamid, manfred\}@google.com}
}
\newcommand{\simplex}{\Delta}
\renewcommand{\v}{\bm{v}}
\newcommand{\Circ}{\!\circ\!}
\newcommand{\zero}{\bm{0}}
\DeclareMathOperator{\range}{range}
\DeclareMathOperator{\domain}{dom}
\DeclareMathOperator{\sign}{sign}
\DeclareMathOperator{\tr}{tr}
\renewcommand{\S}{\bm{S}}
\newcommand{\matlogt}{{\text{\bf log}}_\tau}
\newcommand{\matexpt}{{\text{\bf exp}}_\tau}
\newcommand{\matsign}{{\text{\bf sign}}}
\newcommand{\del}{\bm{\delta}}
\newcommand{\y}{\bm{y}}
\DeclareMathOperator{\diag}{diag}
\newcommand\sbullet[1][.5]{\mathbin{\vcenter{\hbox{\scalebox{#1}{$\bullet$}}}}}
\renewcommand{\dot}[1]{    {\stackrel{\scriptscriptstyle \sbullet[0.42]}{#1}}    }
\newcommand{\wstardot}{%
  \mathrel{\vbox{\offinterlineskip\ialign{%
    \hfil##\hfil\cr
    $\scriptscriptstyle\sbullet[0.4]$\cr
    \noalign{\kern+.4ex}
    $\w$\cr
}}}^*\!\!(t)}
\DeclareMathOperator*{\argmin}{\mathop{\mathrm{argmin}}}
\DeclareMathOperator*{\argsup}{\mathop{\mathrm{argsup}}}
\newcommand{\half}{\sfrac12}
\renewcommand{\u}{\bm{u}}
\newcommand{\x}{\bm{x}}
\renewcommand{\H}{\bm{H}}
\newcommand{\X}{\bm{X}}
\newcommand{\J}{\bm{J}}
\newcommand{\I}{\bm{I}}
\renewcommand{\o}{{\bm{w}}}
\renewcommand{\v}{\bm{v}}
\newcommand{\otil}{{\bm{\widetilde{\w}}}}
\newcommand{\ot}{\widetilde{\o}}
\newcommand{\ott}{\widetilde{w}}
\newcommand{\dd}{\mathrm{d}}
\newcommand{\lam}{\bm{\lambda}}
\newcommand{\one}{\bm{1}}
\newcommand{\bnu}{\bm{\nu}}
\newcommand{\RR}{\mathbb{R}}
\newcommand{\dom}{\mathcal{C}}
\newcommand{\U}{\bm{U}}
\renewcommand{\P}{\bm{P}}
\newcommand{\W}{\bm{W}}
\newcommand{\Wt}{\widetilde{\W}}
\newcommand{\w}{{\bm{w}}
}
\newcommand{\D}{\bm{D}}
\newcommand{\C}{\mathcal{C}}
\newcommand{\V}{\bm{V}}
\newcommand{\Red}[1]{\color{red}{#1}\color{black}}
\newcommand{\Blue}[1]{\color{blue}{#1}\color{black}}
\newcommand{\Magenta} [1]{{\color{magenta} {#1}}}
\definecolor{darkgreen}{rgb}{0.09, 0.45, 0.27}
\begin{document}

\maketitle
\begin{abstract}
    Most of the recent successful applications of neural networks have been based on training with gradient descent updates. However, for some small networks, other mirror descent updates learn provably more efficiently when the target is sparse. We present a general framework for casting a mirror descent update as a gradient descent update on a different set of parameters. In some cases, the mirror descent reparameterization can be described as training a modified network with standard backpropagation. The reparameterization framework is versatile and covers a wide range of mirror descent updates, even cases where the domain is constrained. Our construction for the reparameterization argument is done for the continuous versions of the updates. Finding general criteria for the discrete versions to closely track their continuous counterparts remains an interesting open problem.
\end{abstract}

\section{Introduction}
Mirror descent (MD)~\citep{mirror,eg} refers to a family of updates which transform the parameters $\o\in \dom$ from a convex domain $\dom \in \RR^d$ via a {\it link} function (a.k.a. mirror map) $f:\, \dom \rightarrow \RR^d$ before applying the descent step. 
The {\it continuous-time mirror descent} (CMD) update, which can be seen as the limit case of (discrete-time) MD, corresponds to the solution of the
following ordinary differential equation (ODE)~\citep{mirror,jagota,raginsky}:
    \begin{align}
        \label{eq:cont-md}
	\frac{f(\o(t+h))-f(\o(t))}h
	\;&\stackrel{h\rightarrow 0}{=}\;
	\dot{f}\big(\o(t)\big)=-\eta\,\nabla L(\o(t))\, ,
	 && \text{(CMD)}
	\\\o(t+1)&\;\,=\;\,f^{-1}\Big(f(\o(t))-\eta \nabla
	L(\w(t))\Big)\,.
	 && \text{(MD)}
    \end{align}
    Here $\dot{f} \coloneqq \frac{\partial f}{\partial t}$
    is the time derivative of the link function and the
    vanilla discretized MD update is obtained by setting
    the step size $h$ equal to 1. The main link functions
    investigated in the past are $f(\o)=\o$ and
    $f(\o)=\log(\o)$ leading to the gradient descent (GD)
    and the unnormalized exponentiated gradient (EGU)
    family of updates\footnote{The normalized version is
    called EG and the two-sided version EGU$^\pm$.  More
    about this later.}. These two link functions are
    associated with the squared Euclidean and the relative
    entropy divergences, respectively. For example, the
    classical Perceptron and Winnow algorithms are
    motivated using the identity and log links,
    respectively, when the loss is the hinge loss. A number
    of papers discuss the difference between the two
    updates~\citep{eg,pnorm,nie,eladEGpm} and their
    rotational invariance properties have been explored
    in~\citep{rotinv}. In particular, the {\it Hadamard
    problem} is a paradigmatic linear problem that shows
    that EGU can converge dramatically faster than GD when
    the instances are dense and the target weight vector is
    sparse~\citep{perceptwin,span}. This property is linked
    to the strong-convexity of the relative entropy w.r.t.
    the $\mathrm{L}_1$-norm\footnote{Whereas the squared
    Euclidean divergence (which motivates GD) is strongly-convex w.r.t. the
    $\mathrm{L}_2$-norm.}~\citep{shay}, which motivates the
    discrete EGU update.

\paragraph{Contributions}
Although other MD updates can be drastically more efficient
than GD updates on certain classes of problems,
it was assumed that such MD updates are not realizable using GD.
In this paper, we show that in fact a large number of MD updates
(e.g. EGU, and those motivated by the Burg and Inverse divergences)
can be reparameterized as GD updates.
Concretely, our contributions can be summarized as follows.
\vspace{-0.2cm}
\begin{itemize}[leftmargin=3mm]
  \item We cast continuous MD updates as
minimizing a trade off between a \emph{Bregman momentum} and the loss.
We also derive the dual, natural gradient, and the constraint
	versions of the updates.
  \item We then provide a general framework that allows reparameterizing one CMD update by another. It requires the existence of a certain reparameterization function and a condition on the derivatives of the two link functions as well as the reparameterization function.
  \item Specifically, we show that on certain problems, the
      implicit bias of the GD updates can be controlled by
	considering a family of {\it tempered} updates
	(parameterized by a \emph{temperature} $\tau \in
	\RR$) that interpolate between GD (with $\tau=0$)
	and EGU (with $\tau=1$), while covering a wider class of updates.
\end{itemize}
We conclude the paper with 
a number of open problems for future research directions.

\paragraph{Previous work}
There has been an increasing amount of of interest recently in determining the implicit bias of learning algorithms~\citep{srebro1,cnn-bias,optimal}. Here, we mainly focus on the MD updates.
The special case of reparameterizing continuous EGU as
continuous GD was already known~\citep{Akin79,wincolt}.
In this paper, we develop a more general framework for reparameterizing one CMD update by another. We give a large variety of examples
for reparameterizing the CMD updates as continuous GD
updates. The main new examples we consider are based on the tempered versions of the relative entropy divergence~\citep{bitemp}.
The main open problem regarding the CMD updates is whether the discretization of the reparameterized updates track the discretization of
the original (discretized) MD updates.
The strongest methodology for showing this would be to prove the
same regret bounds for the discretized reparameterized
update as for the original. This has been done in a case-by-case basis for the EG family~\citep{wincolt}. For more
discussion see the conclusion section, where we also discuss how our reparameterization method allows exploring the effect of the structure of the network on the implicit bias.

\paragraph{Some basic notation}
We use $\odot$, $\oslash$, and superscript$\,^\odot$ for
element-wise product, division, and power, respectively.
We let $\o(t)$ denote the weight or parameter vector as a function of time $t$.
Learning proceeds in steps. During step $s$, we start with
weight vector $\o(s)=\o_s$ and go to $\o(s+1)=\o_{s+1}$
while processing a batch of examples.
We also write the Jacobian of vector valued function
$q$ as $\J_{\!q}$\, and use $\H_F$ to denote the Hessian of
a scalar function $F$.
Furthermore, we let $\nabla_\w F(\w(t))$ denote the gradient of function $F(\w)$
evaluated at $\w(t)$ and often drop the subscript $\w$.

\section{Continuous-time Mirror Descent}
For a strictly convex, continuously-differentiable function $F: \dom \rightarrow \RR$ with convex domain $\dom\subseteq \RR^d $,
the \emph{Bregman divergence} between $\ot, \o \in \dom$ is defined as
    \[
        D_F(\ot, \o) \coloneqq F(\ot)\! -\! F(\o) \!-\!
	f(\o)^\top (\ot \!-\! \o)\, ,
\]
where $f \coloneqq \nabla F(\o)$
denotes the gradient of $F$, sometimes called the {\it link function}%
\footnote{%
The gradient of a scalar function is a special case of a Jacobian,
and should therefore be denoted by a row vector.
However, in this paper we use the more common column vector notation for gradients, i.e.  $\nabla F(\o) \coloneqq (\frac{\partial F}{\partial \o})^\top$.}.
Trading off the divergence to the last parameter $\o_s$
with the current loss lets us motivate the iterative {\it mirror
descent} (MD) updates \citep{mirror,eg}:
\begin{align}
        \label{eq:bregman-plus-loss}
	\o_{s+1} &= \argmin_{\o}\, \sfrac{1}{\eta}\,D_F(\o, \o_s) + L(\o)\, ,
\intertext{%
where $\eta > 0$ is often called the \emph{learning rate}.  Solving for $\o_{s+1}$ yields the so-called
{\it prox} or {\it implicit update}~\citep{rockafellar}:
	}
\label{eq:md-implicit}
	f(\o_{s+1}) &=f(\o_s)- \eta\, \nabla L(\o_{s+1})\, .
\intertext{%
This update is typically approximated
by the following {\it explicit} update that uses the gradient at the old
parameter $\o_s$ instead (denoted hear as the MD update):
}
    f(\o_{s+1}) &=f(\o_s)- \eta\, \nabla L(\o_s)\, .
    \hspace{20mm} \text{(MD)}
\label{eq:md-explicit}
\end{align}
We now show that the CMD update~\eqref{eq:cont-md}
can be motivated similarly by replacing
the Bregman divergence in the minimization problem~\eqref{eq:bregman-plus-loss}
with a ``momentum'' version which quantifies the rate of change in the
value of Bregman divergence as $\o(t)$ varies over time.
For the convex function $F$, we define the {\it Bregman momentum} between $\o(t), \o_0 \in \dom$ as the time differential of the Bregman divergence induced by $F$,
 \begin{align*}
     \dot{D}_F(\o(t), \o_0) & = \dot{F}(\o(t)) - f(\o_0)^\top \dot{\o}(t)
     = \big(f(\o(t)) - f(\o_0)\big)^\top \dot{\o}(t)\, .
    \end{align*}
\begin{theorem}
    The CMD update\footnote{%
	An equivalent integral form of the CMD update is
$\;\w(t)=f^{-1}\Big(f(\w_s)-\eta \int_{z=s}^t \nabla L(\w(z))\, d z\Big).$}
$$
       \dot{f}\big(\o(t)\big)=-\eta\,\nabla L(\o(t))\, ,
    \text{ with $\o(s) = \o_s,$}
$$
    is the solution of the following functional:
\begin{equation}
  \label{eq:main-obj}
    \min_{\text{curve }\o(t)}\;\Big\{ \sfrac{1}{\eta}\,\dot{D}_F(\o(t), \o_s) + L(\o(t))\Big\}\, .
  \end{equation}
\end{theorem}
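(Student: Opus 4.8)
The plan is to establish the claim by a direct first-order optimality computation, once the objective is put in a transparent form. First I would substitute the closed form of the Bregman momentum derived just above the theorem,
\[
  \dot{D}_F\big(\o(t),\o_s\big) \;=\; \big(f(\o(t))-f(\o_s)\big)^\top\dot{\o}(t)\,,
\]
turning the functional in \eqref{eq:main-obj} into $\tfrac{1}{\eta}\big(f(\o(t))-f(\o_s)\big)^\top\dot{\o}(t)+L(\o(t))$. The key observation is that this is the exact continuous-time analogue of the discrete prox objective $\tfrac{1}{\eta}D_F(\o,\o_s)+L(\o)$ of \eqref{eq:bregman-plus-loss}: the current position $\o(t)$ is the variable being optimized, while the velocity $\dot{\o}(t)$ plays --- alongside the anchor $\o_s$ --- the role of a quantity held fixed during that minimization, with $\dot{\o}(t)$ pinned down self-consistently by the very curve being constructed. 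Accordingly I would read ``$\min_{\text{curve }\o(t)}$'' as the implicit optimality requirement that, at each time $t$, the point $\o(t)$ is stationary for $\o\mapsto\tfrac{1}{\eta}\big(f(\o)-f(\o_s)\big)^\top\dot{\o}(t)+L(\o)$.

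Next I would impose that condition. Differentiating in $\o$ and using the two standing facts that the Jacobian of the link equals the Hessian of the potential, $\J_f=\H_F$, and that this matrix is symmetric, the stationarity equation becomes $\tfrac{1}{\eta}\H_F(\o(t))\,\dot{\o}(t)+\nabla L(\o(t))=\zero$. The chain rule gives $\H_F(\o(t))\,\dot{\o}(t)=\J_f(\o(t))\,\dot{\o}(t)=\tfrac{\dd}{\dd t}f(\o(t))=\dot{f}(\o(t))$, so this is precisely $\dot{f}(\o(t))=-\eta\,\nabla L(\o(t))$, the CMD update, with the initial condition $\o(s)=\o_s$ built in because $\o_s$ is the anchor of the momentum term. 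Conversely, any curve solving the ODE annihilates this gradient, and strict convexity of $F$ --- under which $\H_F$ is positive definite and $f$ invertible --- makes the solution through $\o_s$ unique; integrating the ODE and applying $f^{-1}$ also recovers the integral form quoted in the theorem's footnote.

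The step I expect to be the real obstacle is not the algebra but pinning down precisely in what sense the functional is ``minimized over the curve'', because the two naive readings both break down: minimizing over the velocity $\dot{\o}(t)$ at a fixed position is unbounded below, since the Bregman momentum is linear in $\dot{\o}(t)$; and treating $\int\big(\tfrac{1}{\eta}\dot{D}_F+L\big)\dd t$ as an action integral degenerates, since $\dot{D}_F$ is a total time derivative, so $\int_s^T\dot{D}_F\,\dd t=D_F(\o(T),\o_s)$ is a pure boundary term and the Euler--Lagrange equations collapse to $\nabla L(\o(t))=\zero$. The write-up therefore has to state explicitly that the momentum functional is meant instantaneously --- as the continuous-time limit of one prox step --- so that ``minimizing over the curve'' unambiguously denotes the pointwise, implicit optimality of $\o(t)$ against the current momentum direction $\dot{\o}(t)$. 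A secondary point to handle is the second-order condition: although $\H_F\succ0$ and convexity of $L$ make the objective look convex in $\o(t)$, the derivative $\partial_\o\big(\H_F(\o)\dot{\o}(t)\big)$ involves third derivatives of $F$ and need not be positive semidefinite, so to claim a genuine minimum rather than merely a stationary curve one must either add a hypothesis or argue case by case.
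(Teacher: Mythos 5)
Your proof is correct and is essentially the paper's own argument: both impose the first-order stationarity condition in $\o(t)$ with $\dot{\o}(t)$ treated as an independent, held-fixed variable (the paper cites this convention explicitly), use $\J_f=\H_F$ to rewrite $\H_F(\o(t))\,\dot{\o}(t)=\dot{f}(\o(t))$, and read off the CMD equation. Your closing caveats about what ``minimizing over the curve'' can rigorously mean --- linearity in $\dot{\o}(t)$, the degenerate action-integral reading, and the unverified second-order condition --- are legitimate points that the paper's one-line proof also leaves unaddressed, but they do not alter the fact that the core computation coincides.
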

\begin{proof}
    Setting the derivatives w.r.t. $\o(t)$ to zero, we have
     \begin{align*}
	& \frac{\partial}{\partial
	\o(t)}\Big(\big(f(\o(t)) - f(\o_s)\big)^\top \dot{\o}(t) + \eta\,
	L(\o(t))\Big)
	 \\ &\quad
	 =\H_F(\o(t))\,\dot{\o}(t) + \frac{\partial \dot{\o}(t)}{\partial \o(t)}
\big(f(\o(t)) - f(\o_s)\big) + \eta\,\nabla L(\o(t))\\
         &\quad = \dot{f}\big(\o(t)\big) + \eta\, \nabla L(\o(t)) = \zero\, ,
        \end{align*}
        where we use the fact
	that $\o(t)\,\text{and}\,\dot{\o}(t)$ are independent
	variables~\citep{burke} \& thus$\,\frac{\partial
	\dot{\o}(t)}{\partial \o(t)}\!\! =\!
	\zero$.\!\!\!\!
\end{proof}
Note that the implicit update~\eqref{eq:md-implicit} and the explicit update~\eqref{eq:md-explicit} can both be realized as the backward and the forward Euler approximations of~\eqref{eq:cont-md}, respectively. Alternatively, \eqref{eq:bregman-plus-loss} can be obtained from~\eqref{eq:main-obj} via a simple discretization of the momentum term (see Appendix~\ref{app:discr}).

We can provide an alternative definition of Bregman momentum
in terms of the dual of $F$ function. If $F^*(\w^*) =
\sup_{\otil \in \dom} \big(\otil^\top \w^*- F(\otil)\big)$
denotes the Fenchel dual of $F$ and $\o = \argsup_{\otil
\in \dom} (\otil^\top \w^* - F(\otil))$,
then the following relation holds between the pair of dual
variables $(\o, \w^*)$:
\begin{equation}
    \label{eq:primal-dual}
    \o = f^*(\w^*)\, , \quad \w^* = f(\o)\, , \quad \text{ and }\,\,\, f^* = f^{-1}\, .
\end{equation}
Taking the derivative of $\o(t)$ and $\w^*(t)$ w.r.t. $t$ yields:\\
\begin{minipage}[t]{.5\textwidth}
\vspace{-5mm}
\begin{align}
    \label{eq:wdots}
        \dot{\o}(t) & = \dot{f^*}\big(\w^*(t)\big) =
	\H_{F^*}\big(\w^*(t)\big)\wstardot\, ,
\end{align}
\end{minipage}%
\begin{minipage}[t]{.5\textwidth}
\vspace{-5mm}
\begin{align}
    \label{eq:tdots}
        \wstardot & = \dot{f}\big(\o(t)\big) = \H_F\big(\o(t)\big)\, \dot{\o}(t)\, .
\end{align}
\end{minipage}\\[1mm]
This pairing allows rewriting the Bregman momentum in its dual form:
    \begin{equation}
        \label{eq:dual-mom}
        \dot{D}_F(\o(t), \o_0) =
	\dot{D}_{F^*}(\w_0^*,\w^*(t))\\
        = (\w^*(t) - \w_0^*)^\top \H_{F^*}(\w^*(t))\,
	\wstardot\, .
    \end{equation}
An expanded derivation is given in Appendix~\ref{a:dualbreg}.
Using \eqref{eq:tdots}, we can rewrite the CMD update~\eqref{eq:cont-md} as
\begin{equation}
    \label{eq:w-natural}
    \dot{\o}(t) = -\eta\, \H_{F}^{-1}(\o(t))\,\nabla L(\o(t))\, , \hspace{1cm} \text{(NGD)}
\end{equation}
i.e. a natural gradient descent (NGD) update~\citep{natgrad} w.r.t. the Riemannian metric $\H_F$.
Using $\nabla L(\o) = \H_{F^*}(\w^*)\nabla_{\w^*} L\!\circ\!
f^*(\w^*)$ and $\H_F(\o) = \H_{F^*}^{-1}(\w^*)$,
the CMD update~\eqref{eq:cont-md}
can be written equivalently in the dual domain $\w^*$ as an NGD update w.r.t. the Riemannian
metric $\H_{F^*}$, or by applying \eqref{eq:wdots} as a CMD with the link $f^*$:\\
\begin{minipage}[t]{.55\textwidth}
\vspace{-5mm}
\begin{align}
    \wstardot &= -\eta\, \H_{F^*}^{-1}(\w^*(t))\,\nabla_{\w^*} L\!\circ\! f^*(\w^*(t))\, ,\label{eq:theta-natural}
\end{align}
\end{minipage}%
\begin{minipage}[t]{.45\textwidth}
\vspace{-5.5mm}
\begin{align}
    \dot{f^*}(\w^*(t)) & = -\eta\, \nabla_{\w^*} L\!\circ\!
    f^*(\w^*(t))\, .\label{eq:dcont-md}
\end{align}
\end{minipage}\\[1mm]
The equivalence of the primal-dual updates was
already shown in~\citep{jagota} for the continuous case and
in~\citep{raskutti} for the discrete case (where it only
holds in one direction).
We will show that the equivalence relation is a special case of the reparameterization theorem, introduced in the next section. In the following, we discuss the projected CMD updates for the constrained setting.
\begin{proposition}
  \label{prop:proj-cmd}
    The CMD update with the additional constraint $\psi\big(\o(t)\big) = \zero$\, for some function $\psi:\, \RR^d \rightarrow \RR^m$ s.t. $\{\o\in \C|\,\psi\big(\o(t)\big) = \zero\}$ is non-empty, amounts to the projected  gradient update
    \begin{equation}
        \label{eq:proj-cont-md}
        \dot{f}\big(\o(t)\big)=-\eta\,\P_\psi(\o(t))\nabla L(\o(t))
	\;\&\;
        \dot{f^*}(\w^*(t))=-\eta\, \P_\psi(\o(t))^\top\,\nabla L \Circ f^*\, (\w^*(t))\,,
    \end{equation}
    where $\P_\psi \coloneqq \I_d -  \J^\top_\psi\big(\J_\psi\H^{-1}_F\J^\top_\psi\big)^{-1}
     \J_\psi\H^{-1}_F$
    is the projection matrix onto the tangent space of $F\;$
    at $\o(t)$ and $\J_\psi(\o(t))$.
    Equivalently, the update can be written as a projected natural gradient descent update
    \begin{equation}
      \label{eq:proj-cont-ngd}
      \dot{\o}(t) \!=\! -\eta  \P^\top_\psi(\o(t)) \H_F^{-1}(\o(t)) \nabla L(\o(t))
	\,\&\,
	\wstardot\!=\!-\eta \P_\psi\H_{F^*}^{-1}(\w^*(t)) \nabla L\Circ
	f^*(\w^*(t)).\!\!\!
    \end{equation}
\end{proposition}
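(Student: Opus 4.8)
The plan is to repeat the variational argument behind Theorem~1, now minimizing the functional~\eqref{eq:main-obj} over curves constrained to the feasible manifold $\{\o:\psi(\o)=\zero\}$. I would adjoin a time-varying Lagrange multiplier $\bm{\mu}(t)\in\RR^m$ and make stationary the augmented integrand $\sfrac{1}{\eta}\big(f(\o(t))-f(\o_s)\big)^\top\dot{\o}(t)+L(\o(t))+\sfrac{1}{\eta}\,\bm{\mu}(t)^\top\psi(\o(t))$ with respect to $\o(t)$. Exactly as in the proof of Theorem~1, the spurious term $\partial\dot{\o}(t)/\partial\o(t)$ vanishes and the momentum plus loss pieces collapse to $\dot{f}(\o(t))+\eta\,\nabla L(\o(t))$ (using~\eqref{eq:tdots}), so stationarity reads
\[
\dot{f}(\o(t)) \;=\; -\eta\,\nabla L(\o(t)) \;-\; \J_\psi(\o(t))^\top\bm{\mu}(t)\,.
\]

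The multiplier is then pinned down by the requirement that the constraint persist along the curve: differentiating $\psi(\o(t))=\zero$ in $t$ gives $\J_\psi(\o(t))\,\dot{\o}(t)=\zero$, which by~\eqref{eq:tdots} is $\J_\psi\H_F^{-1}\dot{f}(\o(t))=\zero$. Substituting the stationarity condition into this identity and using that $\J_\psi\H_F^{-1}\J_\psi^\top$ is invertible---it is, since $\H_F\succ0$ by strict convexity of $F$ and $\J_\psi$ has full row rank on the non-empty feasible set---I solve $\bm{\mu}(t)=-\eta\,(\J_\psi\H_F^{-1}\J_\psi^\top)^{-1}\J_\psi\H_F^{-1}\nabla L(\o(t))$. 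Reinserting gives $\dot{f}(\o(t))=-\eta\,\P_\psi(\o(t))\,\nabla L(\o(t))$ with $\P_\psi$ precisely as stated, and the one-line checks $\P_\psi^2=\P_\psi$ and $\J_\psi\H_F^{-1}\P_\psi=\zero$ confirm it is the advertised projection (its range is exactly the set of $\dot{f}$ for which $\dot{\o}=\H_F^{-1}\dot{f}$ stays tangent to the constraint manifold). This is the first equation of~\eqref{eq:proj-cont-md}.

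The remaining three forms follow by transport through the primal--dual dictionary~\eqref{eq:primal-dual}--\eqref{eq:tdots}, with no further optimization. The only algebraic inputs are the transpose identity $\H_F^{-1}\P_\psi=\P_\psi^\top\H_F^{-1}$ (immediate from the definition of $\P_\psi$ and symmetry of $\H_F$), the pairing $\H_F^{-1}(\o)=\H_{F^*}(\w^*)$, and $\H_F^{-1}(\o)\nabla L(\o)=\nabla_{\w^*} L\Circ f^*(\w^*)$ (chain rule). Then $\dot{\o}(t)=\H_F^{-1}\dot{f}(\o(t))=-\eta\,\P_\psi^\top\H_F^{-1}\nabla L(\o(t))$ is the primal NGD form in~\eqref{eq:proj-cont-ngd}; rewriting the right-hand side as $-\eta\,\P_\psi^\top\nabla_{\w^*} L\Circ f^*(\w^*(t))$ and using $\dot{\o}=\dot{f^*}(\w^*)$ gives the dual link form in~\eqref{eq:proj-cont-md}; and $\wstardot=\dot{f}(\o(t))=-\eta\,\P_\psi\,\H_{F^*}^{-1}(\w^*(t))\,\nabla L\Circ f^*(\w^*(t))$ gives the dual NGD form. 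I expect the main obstacle to be the variational bookkeeping around the constraint rather than the algebra: making precise that treating $\o(t)$ and $\dot{\o}(t)$ as independent (as in Theorem~1) is compatible with the differentiated constraint $\J_\psi\dot{\o}=\zero$, and that the Lagrange-multiplier formulation is equivalent to restricting the admissible velocities to $\ker\J_\psi$ outright---a careless sign or a leftover $\partial\dot{\o}/\partial\o$ term there would corrupt $\P_\psi$.
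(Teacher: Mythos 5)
Your proposal is correct and follows essentially the same route as the paper's proof: adjoin a Lagrange multiplier to the Bregman-momentum functional, take the stationarity condition, pin down the multiplier by differentiating $\psi(\o(t))=\zero$ and multiplying by $\J_\psi\H_F^{-1}$, then transport to the dual via the primal--dual relations. Your extra checks ($\P_\psi^2=\P_\psi$, $\J_\psi\H_F^{-1}\P_\psi=\zero$, and the symmetry identity $\H_F^{-1}\P_\psi=\P_\psi^\top\H_F^{-1}$) are valid and slightly more explicit than the paper's treatment, but they do not change the argument.
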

\begin{example}[(Normalized) EG]
  The unnormalized EG update is motivated using the link function $f(\w) = \log\w$.
  Adding the linear constraint
  $\psi(\w)= \w^\top \one-1$ to the unnormalized EG update
    results in the (normalized) EG update~\citep{eg}.
  Since $\J_\psi(\w)=\one^\top$ and $\H_F(\w)^{-1}=\diag(\w)$,
  $\P_\psi= \I - \frac{\one \one^\top\diag(\w)}
               {\one^\top\!\! \diag(\w) \one}=\I -  \one\w^\top$
  and the projected CMD update \eqref{eq:proj-cont-ngd}
  (the continuous EG update) and its NGD form become
      \begin{align*}
  	\dot{\log}(\o)&=- \eta\; (\I -  \one\w^\top)\;\nabla L(\w)
  	= -\eta\; (\nabla L(\w) - \one \,\w^\top \nabla L(\w))\, ,
  	\\
  	\dot{\o}&= -\eta\; (\diag(\w)\nabla L(\w)-\w \,\w^\top \nabla L(\w))\, .
      \end{align*}
\end{example}

\section{Reparameterization}
\label{sec:repam}
We now establish the main result of the paper.
\begin{theorem} \label{thm:repam}
    Let $F$  and $G$ be strictly convex, continuously-differentiable functions with domains in $\RR^d$ and $\RR^k$, respectively, s.t. $k\ge d$.
    Let $q:\RR^k \rightarrow\RR^d$ be a \emph{reparameterization function}
    expressing parameters $\o$ of $F$ uniquely as $q(\u)$ where $\u$
    lies in the domain of $G$.
    Then the CMD update on parameter $\o$ for the convex function $F$
    (with link $f(\o)=\nabla F(\o)$) and loss $L(\w)$,
    $$\dot{f}(\o(t))=-\eta\,\nabla L(\o(t))\,,$$
    coincides with the CMD update on parameters $\u$
    for the convex function $G$ (with link $g(\u) \coloneqq \nabla G(\u)$)
    and the composite loss $L\!\circ\!q$,
    \[\dot{g}(\u(t)) =
    -\nabla_{\!\u}L\!\circ\!q\big(\u(t)\big)\, ,\] provided
    that $\range(q) \subseteq \domain(F)$ holds and we have
$$      \H_F^{-1}(\o) = \J_{\!q}(\u)\, \H_G^{-1}(\u)\,
    \J_{\!q}(\u)^\top,\text{ for all $\o = q(\u)$\,.}
$$
\end{theorem}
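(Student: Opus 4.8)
The plan is to reduce both continuous updates to their natural-gradient form via \eqref{eq:w-natural} and then transport the dynamics through $q$ using the chain rule and the assumed Hessian identity. First I would rewrite the target $F$-update: by \eqref{eq:tdots}, $\dot{f}(\o(t)) = \H_F(\o(t))\,\dot{\o}(t)$, so $\dot{f}(\o(t)) = -\eta\,\nabla L(\o(t))$ is equivalent to the NGD form $\dot{\o}(t) = -\eta\,\H_F^{-1}(\o(t))\,\nabla L(\o(t))$. Likewise the $G$-update on $\u$ with composite loss $L\Circ q$ is equivalent to $\dot{\u}(t) = -\eta\,\H_G^{-1}(\u(t))\,\nabla_{\!\u}(L\Circ q)(\u(t))$, and the chain rule gives $\nabla_{\!\u}(L\Circ q)(\u) = \J_{\!q}(\u)^\top\,\nabla L(q(\u))$.

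Next I would propagate the $\u$-trajectory through $q$. Suppose $\u(t)$ solves the $G$-update and put $\o(t) \coloneqq q(\u(t))$; differentiating in $t$ yields $\dot{\o}(t) = \J_{\!q}(\u(t))\,\dot{\u}(t)$, and substituting the $G$-update together with $\o(t)=q(\u(t))$ gives
\[
  \dot{\o}(t) = -\eta\,\J_{\!q}(\u(t))\,\H_G^{-1}(\u(t))\,\J_{\!q}(\u(t))^\top\,\nabla L(\o(t)).
\]
By the hypothesis $\H_F^{-1}(\o) = \J_{\!q}(\u)\,\H_G^{-1}(\u)\,\J_{\!q}(\u)^\top$ whenever $\o=q(\u)$, the right-hand side equals $-\eta\,\H_F^{-1}(\o(t))\,\nabla L(\o(t))$, i.e.\ the NGD form of the $F$-update; multiplying by $\H_F(\o(t))$ recovers $\dot{f}(\o(t))=-\eta\,\nabla L(\o(t))$, and $\range(q)\subseteq\domain(F)$ guarantees the curve stays where $F$, $f$, and $\H_F$ are defined. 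For the reverse inclusion I would take a solution $\o(t)$ of the $F$-update and lift it to a curve $\u(t)$ with $\o(t)=q(\u(t))$ and matching initialization $\o_s=q(\u_s)$; the stated uniqueness of the representation $\o=q(\u)$, together with the full row rank of $\J_{\!q}$ that is implicit in the invertibility of $\J_{\!q}\H_G^{-1}\J_{\!q}^\top=\H_F^{-1}$, is what lets the same computation be run backwards.

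I expect the one-line differential identity to be the easy part; the real care lies in the well-posedness and the lift: ensuring the $\u$-side ODE has a (locally unique) solution under the continuous-differentiability assumptions, that the lifted curve $\u(t)$ of a prescribed $\o(t)$ is itself a solution of the $G$-update rather than merely consistent at each instant, and that compatible initializations are available. As sanity checks I would verify that $G=\tfrac12\|\cdot\|^2$ (so $\H_G=\I$ and the condition collapses to $\H_F^{-1}(\o)=\J_{\!q}(\u)\J_{\!q}(\u)^\top$) reproduces the known reparameterizations of continuous EGU as continuous GD, and that $q=f^{-1}=f^*$ with $G=F^*$ recovers the primal--dual CMD equivalence \eqref{eq:dcont-md} flagged in the text.
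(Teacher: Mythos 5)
Your proposal is correct and follows essentially the same route as the paper's proof: rewrite both updates in natural-gradient form, use $\dot{\o}=\J_{\!q}(\u)\dot{\u}$ and the chain rule $\nabla_{\!\u}L\Circ q(\u)=\J_{\!q}(\u)^\top\nabla L(\o)$, then left-multiply by $\J_{\!q}(\u)$ and invoke the Hessian condition to match \eqref{eq:w-natural}. Your added remarks on the reverse lift and well-posedness go beyond what the paper records (its proof is one-directional and omits these points), but they do not change the argument.
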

\begin{proof}
    Note that (dropping $t$ for simplicity) we have $\dot{\o} = \frac{\partial \o}{\partial\u}\,\dot{\u} = \J_{\! q}(\u)\, \dot{\u}$
    and $\nabla_{\!\u} L\!\circ\! q(\u) = \J_{\!q}(\u)^\top\nabla L(\o)$. The CMD update on $\u$ with the link function $g(\u)$ can be written in the NGD form as 
    $\dot{\u} = -\eta\, \H_G^{-1}(\u)\nabla_{\!\u}L\!\circ\! q(\u)$
    . Thus,
    \[
        \dot{\u} = -\eta\, \H_G^{-1}(\u)\,\J_{\!q}(\u)^\top\, \nabla_{\o} L(\o)\, .
    \]
    Multiplying by $\J_{\!q}(\u)$ from the left yields
    \[
         \dot{\o} = - \eta\, \J_{\!q}(\u) \H_G^{-1}(\u) \J_{\!q}(\u)^\top\nabla_{\o} L(\o)\, .
    \]
    Comparing the result to~\eqref{eq:w-natural} concludes the proof.
\end{proof}
In the following examples, we will mainly consider reparameterizing a CMD update with the link function $f(\w)$ as a GD update on $\u$, for which we have $\H_{G} = \I_k$.
\begin{example}[EGU as GD]
  \label{ex:egu-gd}
  The continuous-time EGU can be reparameterized as
    continuous GD with the reparameterization function
	$\o =q(\u)= \sfrac14\,\u\odot\u = \sfrac14\,\u^{\odot 2}$, i.e.
$$	\dot{\log}(\o) = -\eta\; \nabla L(\o)
	\;\; \text{equals} \;\;
	\dot{\u} = -\eta \underbrace{\nabla L\Circ q\,(\u)}_
			 {\nabla_{\u} L\,(\sfrac14\, \u^{\odot 2})}
			 = {-\sfrac{\eta}{2}\, \u \odot \nabla L(\o)}
$$
    This is proven by verifying the condition of Theorem~\ref{thm:repam}:
   \begin{align*}
\J_{\!q}(\u)\J_{\!q}(\u)^\top & = \half \diag(\u) \,(\half\diag(\u))^\top=\diag(\sfrac14\, \u^{\odot 2})=\diag(\o) = \H^{-1}_F(\o)\, .
     \end{align*}
\end{example}
\begin{example}[Reduced EG in $2$-dimension]
Consider the $2$-dimensional normalized weights
$\o = [\,\omega, 1-\omega]^\top$ where $0 \leq \omega \leq 1$.
The normalized reduced EG update~\citep{jagota} is
motivated by the link function $f(w) = \log\frac{w}{1 -
w}$, thus $H_F(w) = \frac{1}{w} + \frac{1}{1 - w} = \frac{1}{w(1-w)}$.
This update can be reparameterized as a GD update on $u \in \RR$ via $\omega = q(u) =
\sfrac{1}{2}(1 + \sin(u))$ i.e.
\[
\dot{\log}(\frac{w}{1 - w}) = -\eta\;\nabla_w L(w)
\;\; \text{equals} \;\;
\dot{u} = -\eta\!\! \underbrace{\nabla_u L\!\circ\! q\,(u)}_
     {\nabla_u L\,\big(\sfrac{1}{2}(1 + \sin(u))\big)}
     = {-\eta\, \frac{\cos(u)}{2}\nabla L(w)}\,.
     \]
     This is verified by checking the condition of Theorem~\ref{thm:repam}:
  $J_q(u) = \sfrac{1}{2}\, \cos(u)$ and
\[
J_q(u) J_q(u)^\top = \frac{1}{4}\,\cos^2(u) = \frac{1}{2}\,\big(1+\sin(u)\big)\, \frac{1}{2}\big(1-\sin(u)\big) = w(1-w) = H_F^{-1}(w)\, .
\]
\end{example}
\textbf{Open problem}\, The generalization of the reduced
EG link function to $d>2$ dimensions becomes $f(\w)=\log\frac{\w}{1-\sum_{i=1}^{d-1} w_i}$
which utilizes the first $(d-1)$-dimensions $\w$ s.t.
$[\w^\top, w_d]^\top \in \simplex^{d-1}$.
Reparameterizing the CMD update using this link as CGD is open.
The update can be reformulated as
\begin{align*}
	\dot{\o}
	=-\eta\; \Big(
        \mathrm{diag}\big(\frac1{\o}\big)+\frac{1}{1-\sum_{i=1}^{d-1}w_i}\,\bm{1}\bm{1}^\top \Big)^{-1} \nabla L(\w)
    =-\eta \left( \mathrm{diag}(\o)-\o\o^\top \right) \nabla\, L(\w)\, .
    \end{align*}

Later, we will give an $d$-dimensional version of EG using a
projection onto a constraint.
\begin{example}[Burg updates as GD]
  \label{ex:burg}
    The update associated with the negative Burg entropy
    $F(\o)=-\sum_{i=1}^d \log w_i$ and link $f(\o)=-\one\oslash\o$
    is reparameterized as GD with $\o=q(\u):=\exp(\u)$, i.e.
  	\[
    \dot{(-\one\oslash\o)} = -\eta\;  \nabla L(\o)
  	\;\text{equals} \;
  	\dot{\u} = -\eta\; \underbrace{\nabla L\Circ q\,(\u)}_{\nabla_{\u} L\,(\exp(\u))}
  	= -\eta \,\exp(\u) \!\odot\! \nabla L(\o)\, ,
   \]
This is verified by the condition of
    Theorem~\ref{thm:repam}: $\H_{F}(\w)
    = \diag(\one\oslash\w)^2$, $\J_{\!q}(\u) = \diag(\exp(\u))$, and
\[
\J_{\!q}(\u)\J_{\!q}(\u)^\top = \diag(\exp(\u))^2 = \diag(\w)^2 = \H^{-1}_{F}(\w)\, .
\]
\end{example}
\begin{example}[EGU as Burg]
  The reparameterization step can be chained, and applied
    in reverse, when the reparameterization function $q$ is invertible.
    For instance, we can first apply the inverse
    reparameterization of the Burg update as GD
    from Example~\ref{ex:burg}, i.e. $\u = q^{-1}(\w) = \log \w$.
    Subsequently, applying the reparameterization of EGU as
    GD from Example~\ref{ex:egu-gd}, i.e. $\v =
    \tilde{q}(\u) = \sfrac14\,\u^{\odot 2}$, results in the reparameterization of EGU as Burg update, that is,
  $$\dot{\log}(\v) = -\eta\; \nabla L(\v)
  	\;\;
  \;\text{equals} \;
  \dot{\left(-\frac1{\o}\right)} = -\eta\;
    \underbrace{\nabla_{\w}L\Circ \tilde{q}\Circ q^{-1}(\o)}
    _{\nabla_{\w}L(\sfrac14 (\log\w)^{\odot 2})}
    = -\eta (\log(\w)\oslash (2\w))\odot \nabla L(\v)
    \, .$$
\end{example}
For completeness,
we also provide the constrained reparameterized updates
(proof in Appendix~\ref{app:const-reparam}).
\begin{theorem}
    \label{thm:proj-reparam}
     The constrained CMD update~\eqref{eq:proj-cont-md} coincides with the reparameterized projected gradient update on the composite loss,
     \[
         \dot{g}\big(\u(t)\big)=-\eta\,\P_{\psi\circ q}(\u(t))\nabla_{\u} L\circ q(\u(t))\, ,
    \]
    where
    $\P_{\psi\circ q} \coloneqq \I_k -   \J^\top_{\psi\circ
    q}\big(\J_{\psi\circ q}\H^{-1}_G\J^\top_{\psi\circ
    q}\big)^{-1} \J_{\psi\circ q}\H^{-1}_G$
 is the projection matrix onto the tangent space at $\u(t)$
    and $\J_{\psi\circ q}(\u) \coloneqq
    \J_q^\top(\u) \J_\psi (\o)$.
\end{theorem}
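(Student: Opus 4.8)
The plan is to rerun the argument of Theorem~\ref{thm:repam}, this time carrying the projection matrix through every step. First I rewrite the reparameterized projected gradient update on $\u$ in the natural-gradient form of Proposition~\ref{prop:proj-cmd},
\[
\dot{\u}(t) \;=\; -\eta\,\P_{\psi\circ q}^\top(\u)\,\H_G^{-1}(\u)\,\nabla_{\u}L\circ q(\u)\, .
\]
Then, exactly as in the proof of Theorem~\ref{thm:repam}, I use the chain-rule identities $\dot{\o}=\J_q(\u)\,\dot{\u}$, $\nabla_{\u}L\circ q(\u)=\J_q^\top(\u)\,\nabla L(\o)$ and $\J_{\psi\circ q}(\u)=\J_\psi(\o)\,\J_q(\u)$, and multiply the $\u$-update on the left by $\J_q(\u)$ to get the induced dynamics of $\o(t)=q(\u(t))$:
\[
\dot{\o}(t) \;=\; -\eta\,\J_q\,\P_{\psi\circ q}^\top\,\H_G^{-1}\,\J_q^\top\,\nabla L(\o)\, .
\]
Comparing this with the natural-gradient form~\eqref{eq:proj-cont-ngd} of the constrained CMD update, the theorem reduces to a single matrix identity.

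That identity is $\J_q\,\P_{\psi\circ q}^\top\,\H_G^{-1}\,\J_q^\top=\P_\psi^\top\,\H_F^{-1}$, to be shown under the hypothesis $\H_F^{-1}(\o)=\J_q(\u)\,\H_G^{-1}(\u)\,\J_q(\u)^\top$ of Theorem~\ref{thm:repam}. To check it I would abbreviate $A\coloneqq\H_G^{-1}(\u)$, $B\coloneqq\J_q(\u)$, $C\coloneqq\J_\psi(\o)$, so that $\H_F^{-1}=BAB^\top$ and $\J_{\psi\circ q}=CB$, and set $M\coloneqq CBAB^\top C^\top=\J_\psi\H_F^{-1}\J_\psi^\top$; note $M$ is exactly the matrix already assumed invertible in Proposition~\ref{prop:proj-cmd}, so no new regularity condition is needed, and its invertibility transfers verbatim to $\J_{\psi\circ q}\H_G^{-1}\J_{\psi\circ q}^\top$. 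Expanding both projection matrices, each side collapses to $BAB^\top-BAB^\top C^\top M^{-1}CBAB^\top$, which is precisely the operator $\P_\psi^\top\H_F^{-1}$ appearing in~\eqref{eq:proj-cont-ngd}; the dual half of~\eqref{eq:proj-cont-md} then follows either by the already-established primal--dual equivalence of the constrained updates or by the same substitution applied to the $\w^*$-equation. Finally, since $\J_{\psi\circ q}\,\P_{\psi\circ q}^\top=\zero$ by the same cancellation, $\tfrac{d}{dt}\,\psi\big(q(\u(t))\big)=\J_{\psi\circ q}\,\dot{\u}=\zero$, so the reparameterized trajectory stays on the constraint manifold whenever it starts on it; and $\psi(\o)=\psi\big(q(\u)\big)$ under $\o=q(\u)$ identifies the two constraint sets, so the two updates really do describe the same curve.

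The only genuine obstacle here is bookkeeping rather than analysis: one must keep the factor orders and transposes consistent when passing between the $d$-dimensional objects ($\P_\psi,\H_F,\J_\psi$) and their $k$-dimensional counterparts ($\P_{\psi\circ q},\H_G,\J_{\psi\circ q}$), and carry out the Woodbury-style cancellation in the identity with care. Once $\J_q\,\P_{\psi\circ q}^\top\,\H_G^{-1}\,\J_q^\top=\P_\psi^\top\,\H_F^{-1}$ is established, the proof closes by the identical ``multiply on the left by $\J_q$ and compare with the natural-gradient form'' step used for Theorem~\ref{thm:repam}.
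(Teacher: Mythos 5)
Your proposal is correct, and it organizes the work differently from the paper. The paper's proof re-runs the Lagrangian derivation in $\u$-space: it adds $\lam(t)^\top\psi\circ q(\u(t))$ to the Bregman-momentum functional, sets the derivative to zero, solves for $\lam(t)$ using $\J_{\psi\circ q}\dot{\u}=\zero$ to obtain the projected update on $\u$, and then closes with a one-line appeal to Theorem~\ref{thm:repam}. You instead take the projected $\u$-update as an instance of Proposition~\ref{prop:proj-cmd} (applied to $G$, $L\circ q$, $\psi\circ q$) and do explicitly the step the paper leaves implicit: pushing the dynamics forward through $q$ and verifying the conjugation identity $\J_q\,\P_{\psi\circ q}^\top\H_G^{-1}\J_q^\top=\P_\psi^\top\H_F^{-1}$. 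Your expansion checks out: with $\H_F^{-1}=\J_q\H_G^{-1}\J_q^\top$ and $\J_{\psi\circ q}=\J_\psi\J_q$, both sides collapse to $\H_F^{-1}-\H_F^{-1}\J_\psi^\top M^{-1}\J_\psi\H_F^{-1}$ with the same Gram matrix $M=\J_\psi\H_F^{-1}\J_\psi^\top=\J_{\psi\circ q}\H_G^{-1}\J_{\psi\circ q}^\top$, so no new invertibility assumption is needed. What your route buys is that the key algebraic fact is isolated and checkable, and you get the constraint-preservation statement $\J_{\psi\circ q}\dot{\u}=\zero$ as a corollary of the same cancellation; what the paper's route buys is that the $\u$-update is derived from the variational principle rather than quoted, so Proposition~\ref{prop:proj-cmd} need not be re-invoked in the reparameterized coordinates. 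Note also that you silently use the correct chain rule $\J_{\psi\circ q}=\J_\psi(\o)\,\J_q(\u)$, whereas the theorem statement writes $\J_q^\top(\u)\J_\psi(\o)$ (a transposition/convention slip in the paper); your convention is the one under which the dimensions compose and the identity holds.
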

\begin{example}[EG as GD]
We now extend the reparameterization of the EGU update as GD in Example~\ref{ex:egu-gd} to the normalized case in terms of a projected GD update. Combining $q(\u) = \sfrac{1}{4}\, \u^{\odot 2}$ with $\psi(\w) = \one^\top\w - 1$, we have $\J_{\psi\circ q}(\u) = \sfrac12\diag(\u)\,\one^\top = \u^\top$ and $\P_{\psi\circ q}(\u) = \I - \frac{\u\u^\top}{\Vert\u\Vert^2}$. Thus,
\begin{align*}
\dot{\u}&=- \eta \big(\I - \frac{\u\u^\top}{\Vert\u\Vert^2}\big)\nabla_{\!\u} L\!\circ\! q(\u) = -\sfrac{\eta}{2}\big(\u - \u\u^\top\big)\,\nabla_{\u}L(\sfrac14\,\u^{\odot 2}) \text{ with } \w(t) = \sfrac14\,\u(t)^{\odot 2}
\end{align*}
equals the normalized EG update in Example~\ref{ex:egu-gd}. Note that similar ideas was explored in an evolutionary game theory context in~\citep{sandholm}.
\end{example}

\section{Tempered Updates}

\begin{wrapfigure}{r}{0.3\textwidth}
    \vspace{-1.5cm}
    \begin{center}
    \includegraphics[width=0.32\textwidth]{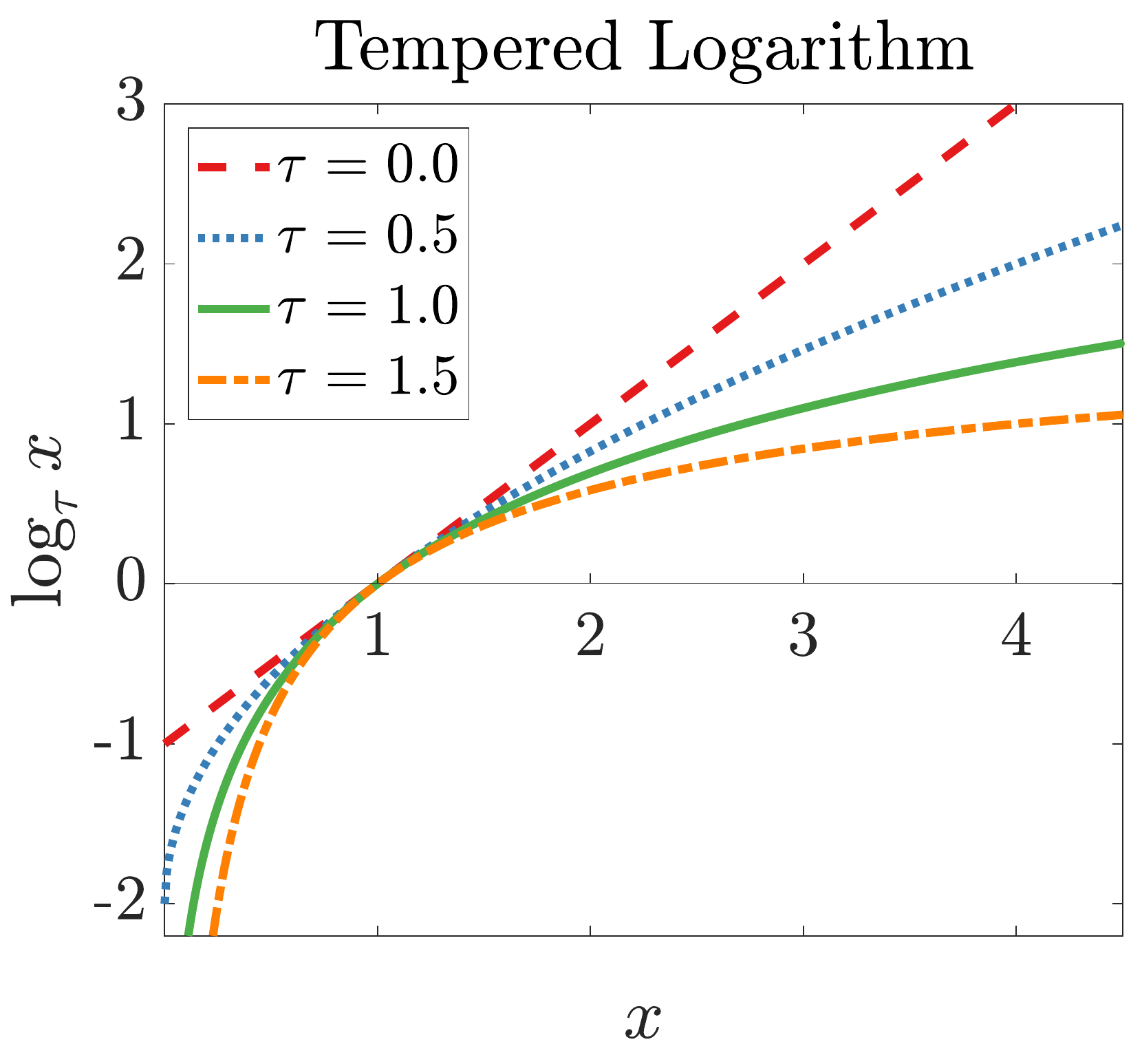}
    \vspace{-6mm}
    \caption{$\log_{\tau}(x)$, for different $\tau\geq 0$.}
    \label{fig:log}
    \end{center}
    \vspace{-3mm}
\end{wrapfigure}
In this section, we consider a richer class of examples derived using the tempered relative entropy divergence~\citep{bitemp}, parameterized by a \emph{temperature} $\tau \in \RR$. As we will see, the tempered updates allow interpolating between many well-known cases. We start with the tempered logarithm link function~\citep{texp1}:
\begin{equation}
  \label{eq:logt}
    f_\tau(\o) = \log_\tau(\o) = \frac{1}{1-\tau} (\o^{1-\tau} -1)\, ,
\end{equation}
for $\o \in \RR^d_{\geq 0}$ and $\tau\in\RR$.
The $\log_\tau$ function is shown in Figure~\ref{fig:log}
for different values of $\tau\geq 0$.
Note that $\tau = 1$ recovers the standard $\log$ function
as a limit point. The $\log_\tau(\o)$ link function is the gradient of the convex function
\begin{align*}
    & F_\tau(\o) = \sum_i \big(w_i \log_\tau w_i + \frac{1}{2\!-\!\tau}\, (1-w_i^{2-\tau})\big)
    = \sum_i \Big(\frac{1}{(1\!-\!\tau)(2\!-\!\tau)}\, w_i^{2\!-\!\tau} - \frac{1}{1\!-\!\tau}\, w_i + \frac{1}{2\!-\!\tau}\Big)\, .
\end{align*}
The convex function $F_\tau$ induces the following tempered Bregman divergence\footnote{The second form is more commonly known as $\beta$-divergence~\citep{beta} with $\beta = 2 - \tau$.}:
\begin{align}
    \label{eq:temp-div}
    D_{F_\tau}(\ot, \o)
    = &\sum_i\! \Big(\ott_i \log_\tau \ott_i -
    \ott_i \log_\tau w_i - \frac{\ott_i^{2-\tau} -
    w_i^{2-\tau}}{2-\tau}\!\Big)
    \nonumber\\
    =& \frac{1}{1-\tau}
    \sum_i\! \Big( \frac{\ott_i^{2-\tau}\!-w_i^{2-\tau}}{2-\tau}
    -(\ott_i-w_i)\,w_i^{1-\tau}\!\Big) .
\end{align}
For $\tau=0$, we obtain the squared Euclidean divergence
$D_{F_0}(\ot, \o) = \frac{1}{2}\,\Vert\ot -
\o\Vert_2^2$ and for $\tau=1$, the relative entropy
$D_{F_1}(\ot, \o) = \sum_i (\ott_i \log
(\sfrac{\ott_i}{w_i}) - \ott_i + w_i)$
(See \citep{bitemp} for an extensive list of examples).

In the following, we derive the CMD updates using the time derivative of~\eqref{eq:temp-div} as the tempered Bregman momentum. Notice that the link function $\log_{\tau}(x)$ is only defined for $x \geq 0$ when $\tau > 0$. In order to have a weight $\o \in \RR^d$, we use the $\pm$-trick~\citep{eg} by maintaining two non-negative weights $\o_+$ and $\o_-$ and setting $\o = \o_+ - \o_-$. We call this the \emph{tempered EGU$^\pm$} updates, which contain the standard EGU$^\pm$ updates as a special case of $\tau = 1$. As our second main result, we show that
that continuous tempered EGU$^\pm$ updates interpolate between
continuous-time GD and continuous EGU (for $\tau\in [0,1]$).
Furthermore, these updates can be simulated by continuous GD on a new set of parameters $\u$ using a simple reparameterization. We show that reparameterizing the tempered updates as GD updates on the composite loss $L\Circ q$ changes the implicit bias of the GD, making the updates to converge to the solution with the smallest $\mathrm{L}_{2-\tau}$-norm for arbitrary $\tau \in [0, 1]$.

\subsection{Tempered EGU and Reparameterization}
We first introduce the generalization of the EGU update using the tempered Bregman
divergence~\eqref{eq:temp-div}. Let $\o(t)\in \RR^d_{\geq 0}$. The tempered EGU update is motivated by
\begin{align*}
    &\argmin_{\text{curve }\o(t)\in \RR^d_{\geq 0}}
    \Big\{\sfrac{1}{\eta}\, \dot{D}_{F_\tau}\big(\o(t), \o_0\big) + L(\o(t))\Big\}\,.
\end{align*}
This results in the CMD update
\begin{equation}
  \label{eq:tempered-egu}
  \dot{\log}_\tau \o(t) = - \nabla L(\o(t))\, .
\end{equation}
\begin{wrapfigure}{r}{0.37\textwidth}
    \vspace{-2mm}
\begin{center}
\includegraphics[width=0.35\textwidth]{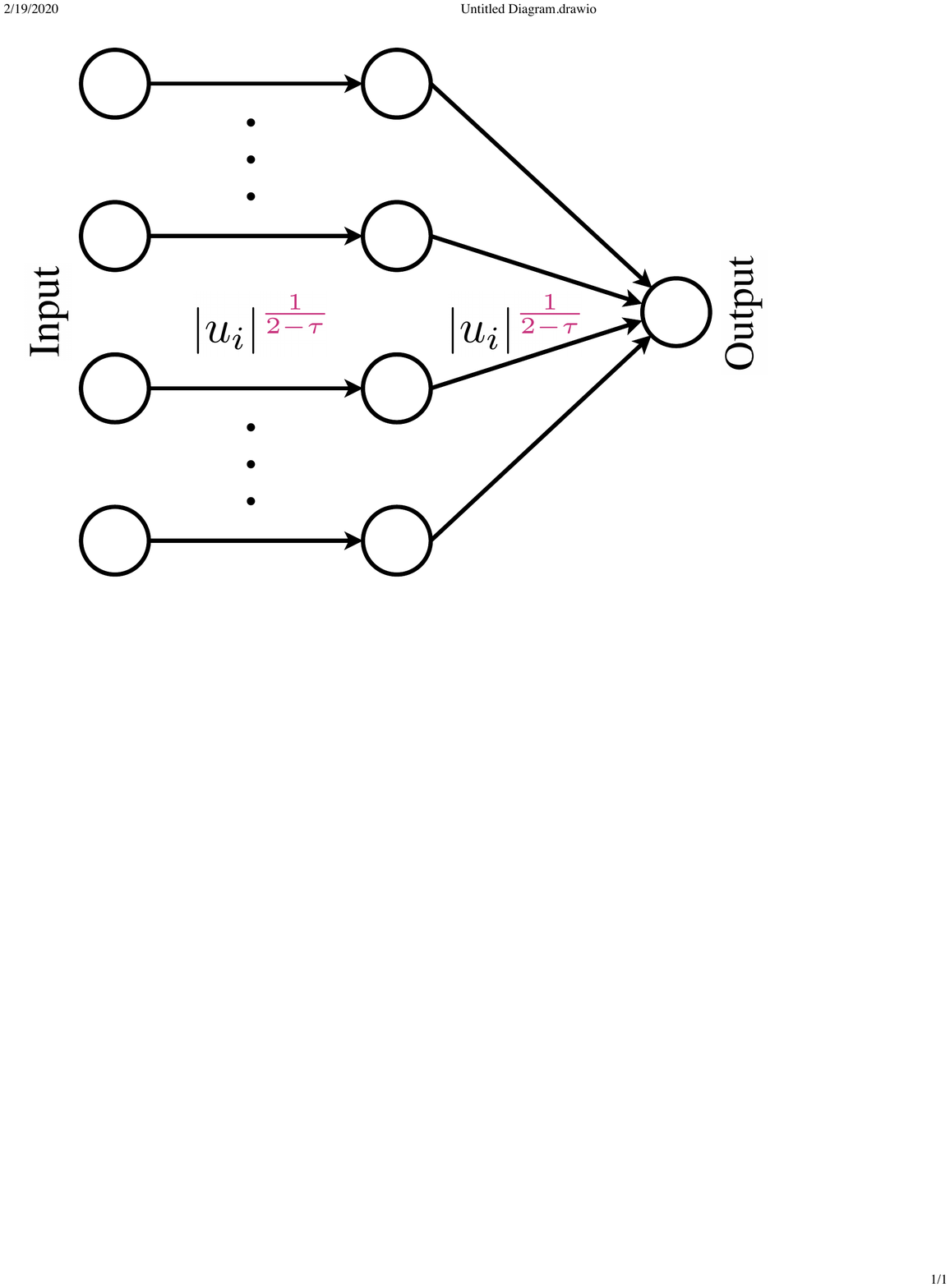}
\vspace{-3mm}
\caption{A reparameterized linear neuron where $w_i = \vert u_i\vert^{\frac{2}{2-\tau}}$ as a two-layer sparse network: value of $\tau=0$ reduces to GD while $\tau=1$ simulates the EGU update.}
\label{fig:neuron}
\end{center}
\vspace{-5mm}
\end{wrapfigure}
An equivalent integral version of this update is
\begin{equation}
\label{eq:temp-egu-int}
\o(t) = \exp_\tau\!\big(\log_\tau \o_0
- \eta\,\int_0^t\!\! \nabla_{\o} L(\o(z))\, \dd z\big),
\end{equation}
where $\exp_\tau(x) \coloneqq [1 + (1-\tau) x]_+^{\frac{1}{1-\tau}}$ is the inverse of tempered logarithm~\eqref{eq:logt}.
    Note that $\tau = 1$ is a limit case which recovers the standard $\exp$ function
    and the update~\eqref{eq:tempered-egu} becomes the
    standard EGU update. Additionally, the GD update (on
    the non-negative orthant) is recovered at $\tau = 0$.
    As a result, the tempered EGU
    update~\eqref{eq:tempered-egu} interpolates between GD
    and EGU for $\tau\in [0,1]$ and generalize beyond for
    values of $\tau > 1$ and $\tau < 0$.\footnote{%
	For example, $\tau=2$ corresponds to the Burg
	updates (Example~\ref{ex:burg}).}
    We now show the
    reparameterization of the tempered EGU
    update~\eqref{eq:tempered-egu} as GD.
    This corresponds
    to continuous-time gradient descent on the network of Figure
    \ref{fig:neuron}.
    \begin{proposition}
        \label{prop:reparam}
    The tempered continuous EGU update can be
	reparameterized continuous-time GD with the reparameterization function
        \begin{equation}
            \label{eq:q-tau}
            \w= q_{\tau}(\u) =
	    \big(\frac{2-\tau}2\big)^{\frac2{2-\tau}}
	    \vert\u\vert^{\odot \frac2{2-\tau}}\, ,\,\, \text{for
	    }\u\in \RR^d \text{ and } \tau \neq 2\, .
        \end{equation}
	That is
	\begin{align*}
	\dot{\log_\tau}(\w) = -\eta  \nabla &\! L(\w)
	\; \text{equals} \;
	\dot{\u} = -\eta
	    \!\!\!\!\!\!\!\!\!\!\!\!\!\!\!
	    \underbrace{\nabla L\Circ q_\tau\,(\u)}_
	{\nabla_{\u} L\big(\big(\frac{2-\tau}2\big)^{\frac{2}{2-\tau}}
				  |\u|^{{\odot \frac2{2-\tau}}}\big)}
            \!\!\!\!\!\!\!\!\!\!\!\!\!\!\!
	    = {-\eta \sign(\u)\!\odot\!
	    \Big(\frac{2-\tau}2\Big)^{\frac{\tau}{2-\tau}}
	    \!|\u|^{\odot\frac{\tau}{2-\tau}} \!\odot\! \nabla L(\o)}.
     \end{align*}
    \end{proposition}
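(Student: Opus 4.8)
The plan is to instantiate the reparameterization theorem (Theorem~\ref{thm:repam}) with $F = F_\tau$ the convex potential whose gradient is the tempered logarithm $f_\tau = \log_\tau$, and with $G(\u) = \sfrac12\|\u\|_2^2$ on $\RR^d$, so that $\H_G = \I_d$ and the CMD update for $G$ on the composite loss $L\Circ q_\tau$ is exactly continuous-time gradient descent $\dot\u = -\eta\,\nabla_{\!\u} L\Circ q_\tau(\u)$. With $k = d$, the only things left to check are the range inclusion $\range(q_\tau)\subseteq\domain(F_\tau)$ and the single algebraic identity $\H_{F_\tau}^{-1}(\o) = \J_{q_\tau}(\u)\,\J_{q_\tau}(\u)^\top$ for every $\o = q_\tau(\u)$.

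The core of the proof is a short diagonal-matrix computation. Since $f_\tau = \nabla F_\tau$ acts coordinate-wise, $\H_{F_\tau}(\o) = \J_{f_\tau}(\o)$ is diagonal with entries $\tfrac{d}{dw}\log_\tau w = w^{-\tau}$, hence $\H_{F_\tau}^{-1}(\o) = \diag(\o^{\odot\tau})$. Writing $c_\tau \coloneqq \big(\tfrac{2-\tau}{2}\big)^{\frac{2}{2-\tau}}$ so that $w_i = c_\tau\,|u_i|^{\frac{2}{2-\tau}}$, differentiation gives the diagonal Jacobian $\J_{q_\tau}(\u) = \diag\!\big(c_\tau\,\tfrac{2}{2-\tau}\,\sign(\u)\odot|\u|^{\odot\frac{\tau}{2-\tau}}\big)$; squaring it (the sign squares away) yields $\J_{q_\tau}(\u)\J_{q_\tau}(\u)^\top = \diag\!\big(c_\tau^2\,\tfrac{4}{(2-\tau)^2}\,|\u|^{\odot\frac{2\tau}{2-\tau}}\big)$, whereas $\H_{F_\tau}^{-1}(q_\tau(\u)) = \diag\!\big(c_\tau^{\tau}\,|\u|^{\odot\frac{2\tau}{2-\tau}}\big)$. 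These coincide precisely when $c_\tau^{\,2-\tau} = \big(\tfrac{2-\tau}{2}\big)^2$, which is exactly the defining relation for $c_\tau$. So the condition of Theorem~\ref{thm:repam} holds, and the main effort is just keeping the exponents and the constant $c_\tau$ consistent — there is no genuine obstacle here, only bookkeeping.

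For the remaining hypotheses: $q_\tau$ sends $\RR^d$ onto $\RR^d_{\geq 0}$, which contains $\domain(F_\tau)$ (it equals $\RR^d_{\geq 0}$ for $\tau\in(0,2)$ and all of $\RR^d$ for $\tau\le 0$), so $\range(q_\tau)\subseteq\domain(F_\tau)$; note $q_\tau$ is not injective because of the absolute value, but Theorem~\ref{thm:repam} only needs a curve $\o(t)$ to be represented as $q_\tau(\u(t))$, and the use of $|\cdot|$ is exactly the $\pm$-trick that makes the nonnegativity-constrained tempered EGU realizable by unconstrained GD. Finally, I would read off the explicit update: Theorem~\ref{thm:repam} gives $\dot\u = -\eta\,\nabla_{\!\u} L\Circ q_\tau(\u) = -\eta\,\J_{q_\tau}(\u)^\top\nabla L(\o)$, and simplifying the prefactor via $c_\tau\,\tfrac{2}{2-\tau} = \big(\tfrac{2-\tau}{2}\big)^{\frac{2}{2-\tau}-1} = \big(\tfrac{2-\tau}{2}\big)^{\frac{\tau}{2-\tau}}$ produces exactly $\dot\u = -\eta\,\sign(\u)\odot\big(\tfrac{2-\tau}{2}\big)^{\frac{\tau}{2-\tau}}|\u|^{\odot\frac{\tau}{2-\tau}}\odot\nabla L(\o)$, as stated. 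The only points meriting a remark are the non-smoothness of $|\cdot|$ at the origin (handled by restricting to where the relevant power is differentiable, or by continuity of the flow) and the limit case $\tau\to 1$ recovering EGU while $\tau = 2$ is excluded.
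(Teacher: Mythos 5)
Your proof is correct and follows essentially the same route as the paper: verify the condition of Theorem~\ref{thm:repam} by computing $\H_{F_\tau}^{-1}(\o)=\diag(\o^{\odot\tau})$ and the diagonal Jacobian $\J_{q_\tau}(\u)=\big(\tfrac{2-\tau}{2}\big)^{\frac{\tau}{2-\tau}}\diag\big(\sign(\u)\odot|\u|^{\odot\frac{\tau}{2-\tau}}\big)$, then check that $\J_{q_\tau}\J_{q_\tau}^\top$ matches. Your additional remarks on the range condition, the explicit update formula, and the non-smoothness at the origin go slightly beyond the paper's terse verification and are welcome (only note that the phrase ``which contains $\domain(F_\tau)$'' should read ``which is contained in $\domain(F_\tau)$,'' since for $\tau\le 0$ the domain is all of $\RR^d$), but the substance is identical.
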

    \begin{proof}
     This is verified by checking the condition of
	Theorem~\ref{thm:repam}. The lhs is
	\begin{align*}
	    { (\H_{F_\tau(\o)}(\w))^{-1}=(\J_{\log_{\tau}}(\o))^{-1} }&
	{=(\diag(\o)^{-\tau})^{-1}=\diag(\o)^\tau}.
	\end{align*}
	Note that the Jacobian of $q_{\tau}$ is
    \[\J_{\!q_\tau}(\u) =
	\Big(\frac{2-\tau}2\Big)^{\frac{\tau}{2-\tau}}\diag\big(\sign(\u)\odot\vert\u\vert^{\odot\frac{\tau}{2-\tau}}\big)
        = \diag(\sign(\u)\odot q_{\tau}(\u)^{\odot\frac{\tau}2}).\]
    Thus the rhs
        $\J_{\!q_\tau}(\u)\J_{\!q_\tau}^\top(\u) $
	of the condition equals
	$\diag\big(\o^{\odot\tau}\big)$
	as well.
    \end{proof}
\subsection{Minimum-norm Solutions}
We apply the (reparameterized) tempered EGU update on the under-determined linear regression problem. For this, we first consider the $\pm$-trick on~\eqref{eq:tempered-egu}, in which we set $\w(t) = \w_+(t) - \w_-(t)$ where
\begin{equation}
    \label{eq:temp-dyn}
        \dot{\log_\tau} \o_+(t) = -\eta\,
	\nabla_{\o}L(\o(t))\,, \quad
        \dot{\log_\tau} \o_-(t) = +\eta\,  \nabla_{\o}L(\o(t))\, .
\end{equation}
Note that using the $\pm$-trick, we have $\w(t) \in \RR^n$. We call the updates~\eqref{eq:temp-dyn} the \emph{tempered EGU$^\pm$}. The reparameterization of the tempered EGU$^\pm$ updates as GD can be written by applying Proposition~\ref{prop:reparam},
\begin{equation}
    \label{eq:temp-repam}
        \dot{\u}_+\!(t)  = -\eta\, \nabla_{\u_+} L\big(q_\tau(\u_+(t)) -  q_\tau(\u_-(t))\big)\,,
     \dot{\u}_-\!(t)  = -\eta\, \nabla_{\u_-} L\big(q_\tau(\u_+(t)) - q_\tau(\u_-(t))\big),
\end{equation}
and setting $\o(t) = q_\tau(\u_+(t)) - q_\tau(\u_-(t))$.

The strong convexity of the $F_\tau$ function w.r.t. the $\mathrm{L}_{2-\tau}$-norm (see~\citep{bitemp}) suggests that the updates motivated by the tempered Bregman divergence~\eqref{eq:temp-div} yield the
minimum $\mathrm{L}_{2-\tau}$-norm solution in certain settings. We verify this by considering the following under-determined linear regression problem. Let $\{\x_n, y_n\}_{n=1}^N$ where $\x_n\in \RR^d,\, y_n\in\RR$ denote the set of input-output pairs and let $\X\in\RR^{N\times d}$ be the design matrix for which the $n$-th row is equal to $\x_n^\top$. Also, let $\y\in\RR^N$ denote the vector of targets. Consider the tempered EGU$^\pm$ updates~\eqref{eq:temp-dyn} on the weights $\o(t) = \o_+(t) - \o_-(t)$ where $\o_+(t), \o_-(t) \geq \bm{0}$ and $\o_+(0) = \o_-(0) = \o_0$. Following~\eqref{eq:temp-egu-int}, we have
\[
\o_+(t) = \exp_\tau\big(\log_\tau \o_0 - \eta\, \int_0^t \X^\top\del(z) \, \dd z\big)\,,\,\,\o_-(t) = \exp_\tau\big(\log_\tau \o_0 + \eta\, \int_0^t \X^\top\del(z)\, \dd z\big)\,,
\]
where $\del(t) = \X\big(\o_+(t) - \o_-(t)\big)$\,.
\begin{theorem}
\label{prop:min-norm}
Consider the underdetermined linear regression problem where $ N< d$. Let $\mathcal{E} = \{\o\in \RR^d|\, \X\o = \y\}$ be the set of solutions with zero error. Given $\o(\infty) \in \mathcal{E}$, then the tempered EGU$^\pm$ updates~\eqref{eq:temp-dyn} with temperature $0 \leq \tau \leq 1$ and initial solution $\o_0 = \alpha\one \geq 0$ converge to the minimum $\mathrm{L}_{2-\tau}$-norm solution in $\mathcal{E}$ in the limit $\alpha \rightarrow 0$.
\end{theorem}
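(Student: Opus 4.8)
The plan is to identify $\w(\infty)$ as a Bregman projection of the initialization onto $\mathcal{E}$ and then let $\alpha\to 0$ via a $\Gamma$-convergence argument that degenerates the tempered Bregman divergence into (a multiple of) the $\mathrm{L}_{2-\tau}$-norm.

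\emph{Step 1 (the limit as a Bregman projection).} Write $\z=(\w_+,\w_-)$, $\z_0=(\alpha\one,\alpha\one)$ and $G(\z)=F_\tau(\w_+)+F_\tau(\w_-)$, so that $g\coloneqq\nabla G=(\log_\tau,\log_\tau)$; note $\nabla L(\w)=\X^\top\del$. Integrating the dynamics~\eqref{eq:temp-dyn} gives $g(\z(\infty))-g(\z_0)=(-\X^\top\bnu,\,+\X^\top\bnu)$ with $\bnu\coloneqq\eta\int_0^\infty\del(z)\,\dd z\in\RR^N$, the convergence of this integral being implied by $\w(t)\to\w(\infty)$ (then $\log_\tau\w_\pm(t)$ converge). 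For any feasible $\z=(\h_+,\h_-)\ge\zero$ with $\X(\h_+-\h_-)=\y$, since $\z(\infty)\ge\zero$ and $\X(\w_+(\infty)-\w_-(\infty))=\y$ by hypothesis, $\langle g(\z(\infty))-g(\z_0),\,\z-\z(\infty)\rangle=-\bnu^\top\big(\X(\h_+-\h_-)-\X(\w_+(\infty)-\w_-(\infty))\big)=0$. This is exactly the first-order optimality condition for minimizing the convex map $D_G(\cdot,\z_0)$ over the convex set $\{\z\ge\zero:\X(\w_+-\w_-)=\y\}$; strict convexity of $F_\tau$ ($\tau<2$) makes the minimizer unique, so $\w(\infty)=\argmin_{\w\in\mathcal{E}}\Phi_\tau(\w,\alpha\one)$, where $\Phi_\tau(\w,\w_0)\coloneqq\inf_{\w_+-\w_-=\w,\;\w_\pm\ge\zero}\big[D_{F_\tau}(\w_+,\w_0)+D_{F_\tau}(\w_-,\w_0)\big]$. (The same conclusion also follows from checking that $D_G(\z^\star,\z(t))$ is non-increasing along the flow for every feasible $\z^\star$.)

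\emph{Step 2 ($\alpha\to 0$).} Substituting $\w_0=\alpha\one$ into the second form of $D_{F_\tau}$ in~\eqref{eq:temp-div}, the terms carrying $\alpha^{1-\tau}$ and $\alpha^{2-\tau}$ vanish for $\tau\in[0,1)$, so $D_{F_\tau}(\ot,\alpha\one)\to\frac{1}{(1-\tau)(2-\tau)}\sum_i\ott_i^{\,2-\tau}$ uniformly on compact subsets of $\RR^d_{\geq 0}$, and a short estimate gives $D_{F_\tau}(\ot,\alpha\one)\ge c\,\Vert\ot\Vert_{2-\tau}^{2-\tau}$ once $\Vert\ot\Vert$ is large, uniformly in small $\alpha$ (for $\tau=1$ one normalizes the objective by $-\log\alpha$ instead, recovering $\Vert\cdot\Vert_1$). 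For $p\coloneqq 2-\tau\in[1,2]$, per coordinate $\min\{a^p+b^p:a-b=c,\ a,b\ge 0\}=|c|^p$, attained by the non-overlapping split, so $\Phi_\tau(\w,\alpha\one)\to\frac{1}{(1-\tau)(2-\tau)}\Vert\w\Vert_{2-\tau}^{2-\tau}$. Combining uniform-on-compacta convergence with the uniform coercive lower bound yields $\Gamma$-convergence plus equicoercivity of $\Phi_\tau(\cdot,\alpha\one)$ restricted to the affine set $\mathcal{E}$, hence $\argmin_{\w\in\mathcal{E}}\Phi_\tau(\w,\alpha\one)\to\argmin_{\w\in\mathcal{E}}\Vert\w\Vert_{2-\tau}^{2-\tau}$. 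For $0\le\tau<1$ the limit objective is strictly convex and coercive on $\mathcal{E}$, so this is the unique minimum-$\mathrm{L}_{2-\tau}$-norm solution; at $\tau=1$ the objective is $\mathrm{L}_1$ and the argument only gives that the accumulation points of $\w(\infty)$ lie in the ($\mathrm{L}_1$-minimizing) optimal face.

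\emph{Main obstacle.} The delicate part is making Step~2 rigorous: one must rule out the constrained minimizers of $\Phi_\tau(\cdot,\alpha\one)$ escaping to infinity as $\alpha\to 0$ (equicoercivity uniform in $\alpha$) and verify the matching $\Gamma$-$\liminf$ and $\Gamma$-$\limsup$ inequalities, which rest on $\alpha$-uniform control of the trajectory --- including the a~priori fact used in Step~1 that $\w_+(t)$ and $\w_-(t)$ individually converge, so that $\bnu$ is well defined. A secondary wrinkle is the borderline temperature $\tau=1$, where $\mathrm{L}_1$ minimizers need not be unique and pinning down the exact limit requires an additional tie-breaking argument.
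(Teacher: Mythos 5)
Your route is genuinely different from the paper's. The paper argues directly: it writes down the KKT system for $\min\Vert\o_+-\o_-\Vert_{2-\tau}^{2-\tau}$ subject to $\X(\o_+-\o_-)=\y$, $\o_\pm\geq\zero$, plugs $\alpha\to0$ into the integral form of the flow to get $\o_\pm(t)=\big[\mp(1-\tau)\,\eta\,\X^\top\!\int_0^t\del(z)\,\dd z\big]_+^{\odot\frac{1}{1-\tau}}$, and reads off the multiplier $\lam=-(1-\tau)\,\eta\int_0^\infty\del(z)\,\dd z$ that certifies dual feasibility and complementary slackness. You instead factor the argument into (i) ``limit of the flow $=$ Bregman projection of the initialization onto the feasible set'' at fixed $\alpha$, and (ii) a $\Gamma$-convergence argument as $\alpha\to0$ that degenerates the divergence into the $\mathrm{L}_{2-\tau}$-norm. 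Your decomposition handles the order of limits ($t\to\infty$ before $\alpha\to0$) more honestly than the paper, which interchanges them without comment, and it makes explicit the assumptions (convergence of $\int_0^\infty\del$, equicoercivity, uniqueness) that the paper uses implicitly. The price is the $\Gamma$-convergence machinery, which the paper's explicit multiplier avoids entirely. Neither proof handles $\tau=1$ cleanly; your caveat about $\mathrm{L}_1$ non-uniqueness applies equally to the paper.

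One concrete repair is needed in your Step 1. You claim $g(\z(\infty))-g(\z_0)=(-\X^\top\bnu,+\X^\top\bnu)$ exactly, by integrating the flow. For $\tau<1$ the mirror map $\log_\tau$ is finite at $0$ (it equals $-\tfrac{1}{1-\tau}$), so coordinates of $\o_\pm$ can and generically do hit the boundary of the orthant: that is precisely why the paper's integral form carries the clipping $[\cdot]_+$ inside $\exp_\tau$, and it is how sparsity arises as $\alpha\to0$. On a clipped coordinate the identity degrades to the inequality $g_i(\z(\infty))-g_i(\z_0)\geq-\eta(\X^\top\bnu)_i$ (and the analogue with the opposite sign on the minus block), with equality only where $\w_{\pm,i}(\infty)>0$. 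Your conclusion survives, because on a clipped coordinate $\z_i-\z_i(\infty)=\h_{\pm,i}\geq 0$, so the discrepancy contributes nonnegatively and you still obtain the variational inequality $\langle g(\z(\infty))-g(\z_0),\,\z-\z(\infty)\rangle\geq 0$ for all feasible $\z$ --- which is exactly the first-order condition for the projection onto the convex set \emph{including} the nonnegativity constraints. But as written, with the equality claim, the step is false precisely in the interesting regime where $\o(\infty)$ is sparse; you should state the inequality version and note that the nonnegativity of $\h_\pm$ rescues the optimality condition.
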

\begin{proof}
We show that the solution of the tempered EGU$^\pm$ satisfies the dual feasibility and complementary slackness KKT conditions for the following optimization problem (omitting $t$ for simplicity):
     \begin{align*}
          \quad \min_{\o_+, \o_-}\, \Vert \o_+ -
	 \o_-\Vert_{2-\tau}^{2-\tau}, \,\,\,\text{for }0 \leq \tau \leq 1,
 \quad \text{s.t.} \quad
 \X(\o_+-\o_-) = \y \,\,\,\text{ and }\,\,\, \o_+, \o_- \geq \bm{0}\, .
     \end{align*}
Imposing the constraints using a set of Lagrange multipliers $\bnu_+, \bnu_-\geq \bm{0}$ and $\lambda \in \RR$, we have
\begin{multline*}
\min_\o \sup_{\bnu_+, \bnu_-\geq \bm{0},
\lambda}\,\, \Big\{ \Vert \o_+ -
\o_-\Vert^{2-\tau}_{2-\tau} +
\lam^\top\,\big(\X(\o_+-\o_-) - \y\big) - \o_+^\top\bnu_+  - \o_-^\top \bnu_-\Big\}\,.
\end{multline*}
The set of KKT conditions are
\[
\begin{cases}
    \o_+, \o_- \geq \bm{0}\, ,\,\,\X\o = \y\, ,\\
    +\sign(\o)\odot \vert\o\vert^{\odot(1-\tau)} - \X^\top\lam \succcurlyeq \bm{0}\, ,\,\,
    -\sign(\o)\odot \vert\o\vert^{\odot(1-\tau)} + \X^\top\lam \succcurlyeq \bm{0}\, ,\\
    \big(\sign(\o)\odot \vert\o\vert^{\odot(1-\tau)}  - \X^\top\lam\big)\odot \o_+ = \bm{0}\, ,\,\,
\big(\sign(\o)\odot \vert\o\vert^{\odot(1-\tau)}  - \X^\top\lam\big)\odot
    \o_- = \bm{0}\,,
           \end{cases}
\]
where $\o = \o_+ - \o_-$. The first condition is imposed by the form of the updates and the second condition is satisfied by the assumption at $t\rightarrow \infty$. Using $\o_0 = \alpha\one$ with $\alpha \rightarrow 0$, we have
    \begin{align*}
        \o_+(t) & = \exp_\tau\big(-\frac{1}{1-\tau} - \eta\, \int_0^t \X^\top\del(z) \, \dd z\big) = \big[-(1-\tau)\,\eta\,\X \int_0^t \del(z)\big]_+^{\odot\frac{1}{1-\tau}}\, ,\\
        \o_-(t) & = \exp_\tau\big(-\frac{1}{1-\tau} + \eta\,
        \int_0^t \X^\top\del(z) \, \dd z\big) =
\big[+(1-\tau)\,\eta\,\X \int_0^t \del(z)\big]_+^{\odot\frac{1}{1-\tau}}\, .
    \end{align*}
Setting $\lam = - (1-\tau)\,\eta \int_0^\infty \del(z)$ satisfies the remaining  KKT conditions.
\end{proof}
\begin{corollary}
    \label{cor:bias}
Under the assumptions of Theorem~\ref{prop:min-norm}, the reparameterized tempered EGU$^\pm$ updates~\eqref{eq:temp-repam} also recover the minimum $\mathrm{L}_{2-\tau}$-norm solution where $\o(t) = q_{\tau}(\u_+(t)) - q_{\tau}(\u_-(t))$.
\end{corollary}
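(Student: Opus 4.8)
The plan is to reduce the claim to Theorem~\ref{prop:min-norm} by observing that the reparameterization does not change the weight trajectory $\o(t)$, only the coordinates in which it is computed. First I would fold the $\pm$-trick into a single CMD update: stack $\bm{\omega} = (\o_+,\o_-) \in \RR^{2d}_{\geq 0}$, let $F_\tau$ act coordinatewise on these $2d$ coordinates (so its link on $\RR^{2d}$ is $\log_\tau$ applied blockwise and $\H_{F_\tau}^{-1}(\bm\omega) = \diag(\o_+^{\odot\tau},\o_-^{\odot\tau})$), and take the composite loss $\widetilde{L}(\bm\omega) := L(\o_+ - \o_-)$, for which $\nabla_{\o_+}\widetilde{L} = \nabla L(\o)$ and $\nabla_{\o_-}\widetilde{L} = -\nabla L(\o)$ with $\o = \o_+ - \o_-$. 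Then the tempered EGU$^\pm$ dynamics~\eqref{eq:temp-dyn} are exactly the CMD update $\dot{\log}_\tau\bm\omega(t) = -\eta\,\nabla\widetilde{L}(\bm\omega(t))$.

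Next I would apply Theorem~\ref{thm:repam} with $G$ the squared Euclidean potential on $\RR^{2d}$ (so $\H_G = \I_{2d}$) and the reparameterization $Q(\u_+,\u_-) := (q_\tau(\u_+), q_\tau(\u_-))$. The Jacobian $\J_Q$ is block diagonal with blocks $\J_{q_\tau}(\u_\pm)$, and the computation in the proof of Proposition~\ref{prop:reparam} gives $\J_{q_\tau}(\u_\pm)\J_{q_\tau}(\u_\pm)^\top = \diag(q_\tau(\u_\pm)^{\odot\tau}) = \diag(\o_\pm^{\odot\tau})$, so the hypothesis $\H_{F_\tau}^{-1}(\bm\omega) = \J_Q\,\H_G^{-1}\,\J_Q^\top$ holds along the flow. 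Theorem~\ref{thm:repam} then states that the GD flow on $(\u_+,\u_-)$ with composite loss $\widetilde{L} \Circ Q$ — which is precisely~\eqref{eq:temp-repam} — pushes forward under $Q$ to a solution of the stacked CMD ODE; that is, $\o(t) = q_\tau(\u_+(t)) - q_\tau(\u_-(t))$ satisfies~\eqref{eq:temp-dyn}. Taking the initial condition $\u_\pm(0) = \tfrac{2}{2-\tau}\,\alpha^{(2-\tau)/2}\,\one$, which is the positive-branch preimage $q_\tau^{-1}(\alpha\one)$, gives $\o_\pm(0) = \alpha\one$, matching the hypothesis of Theorem~\ref{prop:min-norm}, and this preimage tends to $\zero$ as $\alpha\to 0$. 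Hence the reparameterized flow realizes exactly the same $\o(t)$, so in the limit $\alpha\to 0$ it converges to the minimum $\mathrm{L}_{2-\tau}$-norm element of $\mathcal{E}$.

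The delicate point — and the part I expect to be the main obstacle — is justifying rigorously that the $(\u_+,\u_-)$-flow and the $(\o_+,\o_-)$-flow trace the \emph{same} curve, since $q_\tau$ fails to be $C^1$ at coordinates $u_i = 0$ (its Jacobian degenerates or blows up there, depending on the sign of $\tau$) and $\log_\tau$ is singular at $0$, so neither vector field is Lipschitz on the closed orthant and one cannot simply invoke uniqueness of the ODE. I would sidestep this exactly as Theorem~\ref{prop:min-norm} does, by working with the closed-form integral solution: substituting $\o_\pm = q_\tau(\u_\pm)$ shows the reparameterized flow inherits the representation $\o_\pm(t) = \exp_\tau(\log_\tau\o_0 \mp \eta\int_0^t \X^\top\del(z)\,\dd z)$ of~\eqref{eq:temp-egu-int}, with $\del(t) = \X(\o_+(t) - \o_-(t))$ as before, after which the KKT verification in the proof of Theorem~\ref{prop:min-norm} — with the same multiplier $\lam = -(1-\tau)\eta\int_0^\infty\del(z)\,\dd z$ — applies verbatim, never requiring differentiability of $q_\tau$ through the origin. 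The remainder of the argument is an immediate restatement of Theorem~\ref{prop:min-norm}.
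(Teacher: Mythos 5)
Your proposal is correct and follows essentially the same route as the paper, which treats the corollary as an immediate consequence of the reparameterization equivalence (Theorem~\ref{thm:repam} via Proposition~\ref{prop:reparam}) combined with Theorem~\ref{prop:min-norm}: the GD flow on $(\u_+,\u_-)$ pushes forward to the same trajectory $\o(t)=q_\tau(\u_+(t))-q_\tau(\u_-(t))$, so the limiting solution is unchanged. Your additional care about the initial condition $\u_\pm(0)=q_\tau^{-1}(\alpha\one)$ and the non-Lipschitz behavior of $q_\tau$ and $\log_\tau$ at the origin (handled via the integral representation~\eqref{eq:temp-egu-int}) supplies rigor that the paper leaves implicit.
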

    This corollary shows that reparameterizing the loss in
    terms of the parameters $\u$ changes the implicit bias
    of the GD updates. Similar results were observed before
    in terms of sparse signal recovery~\citep{optimal} and
    matrix factorization~\citep{srebro1}. Here, we show
    that this is a direct result of the dynamics induced by
    the reparameterization Theorem~\ref{thm:repam}.

\section{Conclusion and Future Work}
In this paper, we discussed the continuous-time mirror
    descent updates and provided a general framework for
    reparameterizing these updates. Additionally, we
    introduced the tempered EGU$^\pm$ updates and their
    reparameterized forms. The tempered EGU$^\pm$ updates
    include the two commonly used gradient descent and exponentiated gradient updates,
    and interpolations between them.
For the underdetermined linear regression problem we showed that under certain conditions,
the tempered EGU$^\pm$ updates converge to the minimum $\mathrm{L}_{2-\tau}$-norm solution.
    The current work leads to many interesting future directions:
\begin{itemize}[leftmargin=3mm]
             \item
The focus is this paper was to develop the reparameterization method in full
	generality. Our reparameterization equivalence theorem holds only in the continuous-time
and the equivalence relation breaks down after discretization. However,
	in many important cases the discretized reparameterized updates closely track the
	discretized original updates~\citep{wincolt}.
	This was done by proving the same on-line worst
	case regret bounds for the discretized
	reparameterized updates and the originals.
	A key research direction is to find general conditions for which this is true.
                 \item Perhaps the most important application of the current work is reparameterizing the weights of deep neural networks for achieving sparse solutions or obtaining an implicit form of regularization that mimics a trade-off between the ridge and lasso methods (e.g. elastic net regularization~\citep{elastic}).
Here the deep open question is the following:
Are sparse networks (as in Figure \ref{fig:neuron}) required,
if the goal is to obtain sparse solutions efficiently?
             \item A more general treatment of the underdetermined linear regression
		 case requires analyzing the results for arbitrary start vectors.
		 Also, developing a matrix form of the reparameterization theorem is left for future work.
         \end{itemize}
\newpage
\section*{Broader Impact}
The result of the paper suggests that the mirror descent updates can be effectively used in neural networks by running backpropagation on the reparameterized form of the neurons. This may have a potential use case for training these networks more efficiently. This is a theoretical paper and the broader ethical impact discussion is not applicable.

\bibliographystyle{plainnat}
\bibliography{refs}

\newpage
\appendix

\section{Dual Form of Bregman Momentum}
\label{a:dualbreg}
The dual form of Bregman momentum given
in~\eqref{eq:dual-mom} can be obtained by first forming the
dual Bregman divergence in terms of the dual variables
$\w^*(t)$ and $\w_s^*$ and taking the time derivative, that is,
 \begin{align*}
     \dot{D}_F(\o(t), \o_0) &=
     \dot{D}_{F^*}(\w_0^*,\w^*(t))
      = \frac{\partial}{\partial t}\Big(F^*(\w_0^*) -
      F^*(\w^*(t)) - f^*(\w^*(t))^\top\big(\w_0^* - \w^*(t)\big)\\
     & =-\dot{F^*}(\w^*(t)) + f^*(\w^*(t))^\top \wstardot
        + (\w^*(t) - \w_0^*)^\top \H_{F^*}(\w^*(t))\,
	\wstardot\\
        & = \big(\w^*(t) - \w_0^*\big)^\top
	\H_{F^*}(\w^*(t))\, \wstardot\, ,
\end{align*}
where we use the fact
that $\dot{F^*}(\w^*(t)) =
f^*(\w^*(t))^\top \wstardot$.

\section{Constrained Updates and Reparameterization}
\label{app:const-reparam}
We first provide a proof for Proposition~\ref{prop:proj-cmd}. Then, we prove Theorem~\ref{thm:proj-reparam}.

\noindent \textbf{Proposition~\ref{prop:proj-cmd}.\, }\emph{The CMD update with the additional constraint $\psi\big(\o(t)\big) = \zero$\, for some function $\psi:\, \RR^d \rightarrow \RR^m$ s.t. $\{\o\in \C|\,\psi\big(\o(t)\big) = \zero\}$ is non-empty, amounts to the projected  gradient update
    \begin{equation}
        \dot{f}\big(\o(t)\big)=-\eta\,\P_\psi(\o(t))\nabla L(\o(t))
	\;\&\;
        \dot{f^*}(\w^*(t))=-\eta\, \P_\psi(\o(t))^\top\,\nabla L \Circ f^*\, (\w^*(t))\,,\tag{\ref{eq:proj-cont-md}}
    \end{equation}
    where $\P_\psi \coloneqq \I_d -  \J^\top_\psi\big(\J_\psi\H^{-1}_F\J^\top_\psi\big)^{-1}
     \J_\psi\H^{-1}_F$
    is the projection matrix onto the tangent space of $F\;$
    at $\o(t)$ and $\J_\psi(\o(t))$.
    Equivalently, the update can be written as a projected natural gradient descent update
    \begin{equation}
     \dot{\o}(t) \!=\! -\eta  \P^\top_\psi(\o(t)) \H_F^{-1}(\o(t)) \nabla L(\o(t))
	\,\&\,
	\wstardot\!=\!-\eta \P_\psi\H_{F^*}^{-1}(\w^*(t)) \nabla L\Circ
        f^*(\w^*(t)).\!\!\!\tag{\ref{eq:proj-cont-ngd}}
    \end{equation}}
\begin{proof}[Proof of Proposition~\ref{prop:proj-cmd}]
  We use a Lagrange multiplier $\lam(t) \in \RR^m$ in~\eqref{eq:main-obj} to enforce the constraint $\psi(\o(t)) = \zero$ for all $t \geq 0$,
  \begin{align}
    \label{eq:main-obj-lam}
\min_{\o(t)}\,\Big\{ \sfrac{1}{\eta}\,\dot{D}_F(\o(t), \o_s) + L(\o(t)) + \lam(t)^\top\psi(\o(t)) \Big\}\, .
\end{align}
Setting the derivative w.r.t. $\o(t)$ to zero, we have
\begin{equation}
  \label{eq:der-const-cmd}
\dot{f}(\o(t)) + \eta\, \nabla_{\o} L(\o(t)) +
    \J_\psi(\o(t))^\top \lam(t) = \zero\, ,
\end{equation}
    where $\J_\psi(\o(t))$ is the Jacobian of the function $\psi(\o(t))$. In order to solve for $\lam(t)$, first note that $\dot{\psi}(\o(t)) = \J_\psi(\o(t))\,\dot{\o}(t) = \zero$. Using the equality $\dot{f}(\o(t)) = \H_F(\o(t))\dot{\o}(t)$ and multiplying both sides by $\J_\psi(\o(t))\H^{-1}_F(\o(t))$ yields (ignoring $t$)
\begin{align*}
\cancel{\J_\psi(\o)\,\dot{\o}} + \eta\, \J_\psi(\o)\H^{-1}_F(\o)\nabla L(\o)
+ \J_\psi(\o)\H^{-1}_F(\o)\J^\top_\psi(\o)\lam(t) = \zero
\end{align*}
Assuming that the inverse exists, we can written
\begin{align*}
\lam = -\eta\, \big(\J_\psi(\o)\H^{-1}_F(\o)\J^\top_\psi(\o)\big)^{-1} \J_\psi(\o)\H^{-1}_F(\o)\nabla L(\o)\, .
\end{align*}
Plugging in for $\lam(t)$ yields~\eqref{eq:proj-cont-ngd}. Multiplying both sides by $\H_F(\o)$ and using $\dot{f}(\o) = \H_F(\o)\dot{\o}$ yields~\eqref{eq:proj-cont-md}.
\end{proof}

\noindent\textbf{Theorem~\ref{thm:proj-reparam}. }\emph{
     The constrained CMD update~\eqref{eq:proj-cont-md} coincides with the reparameterized projected gradient update on the composite loss,
     \[
         \dot{g}\big(\u(t)\big)=-\eta\,\P_{\psi\circ q}(\u(t))\nabla_{\u} L\circ q(\u(t))\, ,
    \]
    where
    $\P_{\psi\circ q} \coloneqq \I_k -   \J^\top_{\psi\circ
    q}\big(\J_{\psi\circ q}\H^{-1}_G\J^\top_{\psi\circ
    q}\big)^{-1} \J_{\psi\circ q}\H^{-1}_G$
 is the projection matrix onto the tangent space at $\u(t)$
    and $\J_{\psi\circ q}(\u) \coloneqq
    \J_q^\top(\u) \J_\psi (\o)$.
}
\begin{proof}[Proof of Theorem~\ref{thm:proj-reparam}]
    Similar to the proof of Proposition~\ref{prop:proj-cmd}, we use a Lagrange multiplier $\lam(t) \in \RR^m$ to enforce the constraint $\psi\circ q(\u(t)) = \zero$ for all $t \geq 0$,
  \begin{align*}
\min_{\u(t)}\,\Big\{ \sfrac{1}{\eta}\,\dot{D_G}(\u(t), \u_s) + L\!\circ\!q(\u(t)) + \lam(t)^\top\psi\Circ q(\u(t)) \Big\}\, .
\end{align*}
Setting the derivative w.r.t. $\u(t)$ to zero, we have
\[
    \dot{g}(\o(t)) + \eta\, \nabla_{\u} L\!\circ\!q(\o(t)) + \J^\top_{\psi\circ q}(\u(t)) \lam(t) = \zero\, ,
\]
    where $\J_{\psi\circ q}(\u(t)) \coloneqq \J^\top_q(\u)\nabla \psi(\o(t))$. In order to solve for $\lam(t)$, we use the fact that $\dot{\psi\circ q}(\u(t)) = \J_{\psi\circ q}(\u(t))\,\dot{\u}(t) = \zero$. Using the equality $\dot{g}(\u(t)) = \H_G(\u(t))\dot{\u}(t)$ and multiplying both sides by $\J_{\psi\circ q}(\u(t))\H^{-1}_G(\u(t))$ yields (ignoring $t$)
\begin{align*}
    \J_{\psi\circ q}(\u)\,\dot{\u} + \eta\, \J_{\psi\circ q}(\o)\H^{-1}_G(\u)\nabla L\!\circ\!q(\u)
    + \J_{\psi\circ q}(\o)\H^{-1}_G(\o)\J^\top_{\psi\circ q}(\u)\lam(t) = \zero\, .
\end{align*}
    The rest of the proof follows similarly by solving for $\lam(t)$ and rearranging the terms. Finally, applying the results of Theorem~\ref{thm:repam} concludes the proof.
\end{proof}

\section{Discretized Updates}
\label{app:discr}

In this section, we discuss different strategies for discretizing the CMD updates and provide examples for each case. 

The most straight-forward discretization of the unconstrained CMD update~\eqref{eq:cont-md} is the forward Euler (i.e. explicit) discretization, given in~\eqref{eq:md-explicit}. Note that this corresponds to a (approximate) minimizer of the discretized form of~\eqref{eq:main-obj}, that is,
\[
\argmin_{\o}\,\Big\{ \sfrac{1}{\eta}\,\big(D_F(\o, \o_s) - \underbrace{D_F(\o_s, \o_s)}_{=0}\big) + L(\o)\Big\}\, .
\]
An alternative way of discretizing is to apply the approximation on the equivalent natural gradient form~\eqref{eq:w-natural}, which yields
\[
\o_{s+1} - \o_s = -\eta\, \H_{F}^{-1}(\o_s)\,\nabla L(\o_s)\, .
\]
Despite being equivalent in continuous-time, the two approximations may correspond to different updates after discretization. As an example, for the EG update motivated by $f(\o) = \log \o$ link, the latter approximation yields
\[
\o_{s+1} = \o_s \odot \big(\one - \eta\, \nabla L(\o_s)\big)\, ,
\]
which corresponds to the unnormalized {\it prod} update, introduced by~\cite{prod} as a Taylor approximation of the original EG update.

The situation becomes more involved for discretizing the constrained updates. As the first approach, it is possible to directly discretize the projected CMD update~\eqref{eq:proj-cont-md}
\[
f\big(\ot_{s+1}\big) - f\big(\o_{s}\big)=-\eta\,\P_\psi(\o_s)\nabla L(\o_s)\, .
\]
 However, note that the new parameter $\ot_{w+1}$ may fall outside the constraint set~$\C_\psi \coloneqq \{\o\in \C|\,\psi\big(\o)\big) = \zero\}$. As a result, a Bregman projection~\citep{shay} into $\C_\psi$ may need to be applied after the update, that is
 \begin{equation}
\label{eq:bregman-proj}
\o_{s+1} = \argmin_{\o \in \C_\psi}\, D_F(\o, \ot_{s+1})\, .
 \end{equation}
As an example, for the normalized EG updates with the additional constraint that $\o^\top \one = 1$, we have $\P_\psi(\o) = \I_d - \one\o^\top$ and the approximation yields
\begin{align*}
\log\big(\ot_{s+1}\big) - \log\big(\o_{s}\big)
=-\eta\,\big(\nabla L(\o_s) - \one\,\mathbb{E}_{\o_s}[\nabla L(\o_s)] \big)\, ,
\end{align*}
where $\mathbb{E}_{\o_s}[\nabla L(\o_s)] = \o_s^\top \nabla L(\o_s)$. Clearly, $\ot_{s+1}$ may not necessarily satisfy $\ot_{s+1}^\top \one = 1$. Therefore, we apply
\[
\o_{s+1} = \frac{\ot_{s+1}}{\Vert \ot_{s+1}\Vert_1}\, ,
\]
which corresponds to the Bregman projection onto the unit simplex using the relative entropy divergence~\citep{eg}.

An alternative approach for discretizing the constrained update would be to first discretize the functional objective with the Lagrange multiplier~\eqref{eq:main-obj-lam} and then (approximately) solve for the update. That is,
\begin{align*}
\o_{s+1} = \argmin_{\o}\,\Big\{ \sfrac{1}{\eta}\,\big(D_F(\o, \o_s) - \underbrace{D_F(\o_s, \o_s)}_{=0}\big) + L(\o) + \lam^\top\psi(\o) \Big\}\, .
\end{align*}
Note that in this case, the update satisfies the constraint $\psi(\o_{s+1}) = \zero$ because of directly using the Lagrange multiplier. For the normalized EG update, this corresponds to the original normalized EG update in~\citep{wm},
\[
\o_{s+1} = \frac{\o_{s}\odot\exp\big(-\eta\, \nabla L(\o_s)\big)}{\Vert \o_{s}\odot\exp\big(-\eta\, \nabla L(\o_s)\big)\Vert_1}\, .
\]
Finally, it is also possible to discretized the projected natural gradient update~\eqref{eq:proj-cont-ngd}. Again, a Bregman projection into $\C_\psi$ may need to be required after the update, that is,
\[
\ot_{s+1} - \o_s = -\eta \P_\psi(\o_s)^\top\H_F^{-1}(\o_s)\nabla L(\o(t))\, ,
\]
followed by~\eqref{eq:bregman-proj}. For the normalized EG update, the first step corresponds to
\[
\o_{s+1} = \o_s \odot \Big(\one - \eta \big(\nabla L(\o_s) - \one\,\mathbb{E}_{\o_s}[\nabla L(\o_s)] \big)\Big)\, ,
\]
which recovers to the {\it approximated EG} update of~\cite{eg}. Note that $\o_{s+1}^\top \one = 1$ and therefore, no projection step is required in this case.

\end{document}